\theoremstyle{plain}
\newtheorem{theorem}{Theorem}[section]
\newtheorem{lemma}[theorem]{Lemma}
\newtheorem{corollary}[theorem]{Corollary}
\theoremstyle{definition}
\newtheorem{definition}[theorem]{Definition}
\theoremstyle{remark}
\icmltitlerunning{}
\begin{document}

\twocolumn[
\icmltitle{Dynamically Stable Poincar\'e Embeddings for Neural Manifolds}

% It is OKAY to include author information, even for blind
% submissions: the style file will automatically remove it for you
% unless you've provided the [accepted] option to the icml2022
% package.

% List of affiliations: The first argument should be a (short)
% identifier you will use later to specify author affiliations
% Academic affiliations should list Department, University, City, Region, Country
% Industry affiliations should list Company, City, Region, Country

% You can specify symbols, otherwise they are numbered in order.
% Ideally, you should not use this facility. Affiliations will be numbered
% in order of appearance and this is the preferred way.
\icmlsetsymbol{equal}{*}

\begin{icmlauthorlist}
\icmlauthor{Jun Chen}{to}
\icmlauthor{Yuang Liu}{to}
\icmlauthor{Xiangrui Zhao}{to}
\icmlauthor{Mengmeng Wang}{to}
\icmlauthor{Yong Liu}{to}
\end{icmlauthorlist}

\icmlaffiliation{to}{Institute of Cyber-Systems and Control, Zhejiang University, Hangzhou, China}

\icmlcorrespondingauthor{Jun Chen}{junc@zju.edu.cn}
\icmlcorrespondingauthor{Yong Liu}{yongliu@iipc.zju.edu.cn}

% You may provide any keywords that you
% find helpful for describing your paper; these are used to populate
% the "keywords" metadata in the PDF but will not be shown in the document
%\icmlkeywords{Machine Learning, ICML}

\vskip 0.3in
]

% this must go after the closing bracket ] following \twocolumn[ ...

% This command actually creates the footnote in the first column
% listing the affiliations and the copyright notice.
% The command takes one argument, which is text to display at the start of the footnote.
% The \icmlEqualContribution command is standard text for equal contribution.
% Remove it (just {}) if you do not need this facility.

\printAffiliationsAndNotice{}  % leave blank if no need to mention equal contribution
%\printAffiliationsAndNotice{\icmlEqualContribution} % otherwise use the standard text.

\begin{abstract}
  In a Riemannian manifold, the Ricci flow is a partial differential equation for evolving the metric to become more regular. We hope that topological structures from such metrics may be used to assist in the tasks of machine learning. However, this part of the work is still missing. In this paper, we propose Ricci flow assisted Eucl2Hyp2Eucl neural networks that bridge this gap between the Ricci flow and deep neural networks by mapping neural manifolds from the Euclidean space to the dynamically stable Poincar\'e ball and then back to the Euclidean space. As a result, we prove that, if initial metrics have an $L^2$-norm perturbation which deviates from the Hyperbolic metric on the Poincar\'e ball, the scaled Ricci-DeTurck flow of such metrics smoothly and exponentially converges to the Hyperbolic metric. Specifically, the role of the Ricci flow is to serve as naturally evolving to the stable Poincar\'e ball. For such dynamically stable neural manifolds under the Ricci flow, the convergence of neural networks embedded with such manifolds is not susceptible to perturbations. And we show that Ricci flow assisted Eucl2Hyp2Eucl neural networks outperform with their all Euclidean counterparts on image classification tasks.
\end{abstract}

\section{Introduction}
\label{intro}

In the field of machine learning, Euclidean embeddings for representation learning are the universal and successful method, which benefits from simply convenience and closed-form formulas in the Euclidean space $\mathbb{R}^n$ endowed with the Euclidean metric $g^E$. Moreover, in the broad application of neural networks, Euclidean embeddings are also showing off, including image classification~\citep{krizhevsky2012imagenet,simonyan2014very}, semantic segmentation~\citep{long2015fully,chen2014semantic}, object detection~\citep{girshick2015fast}, etc.

\begin{figure}[tbhp]
	\centering
	\subfigure
	{\includegraphics[width=.4\textwidth]{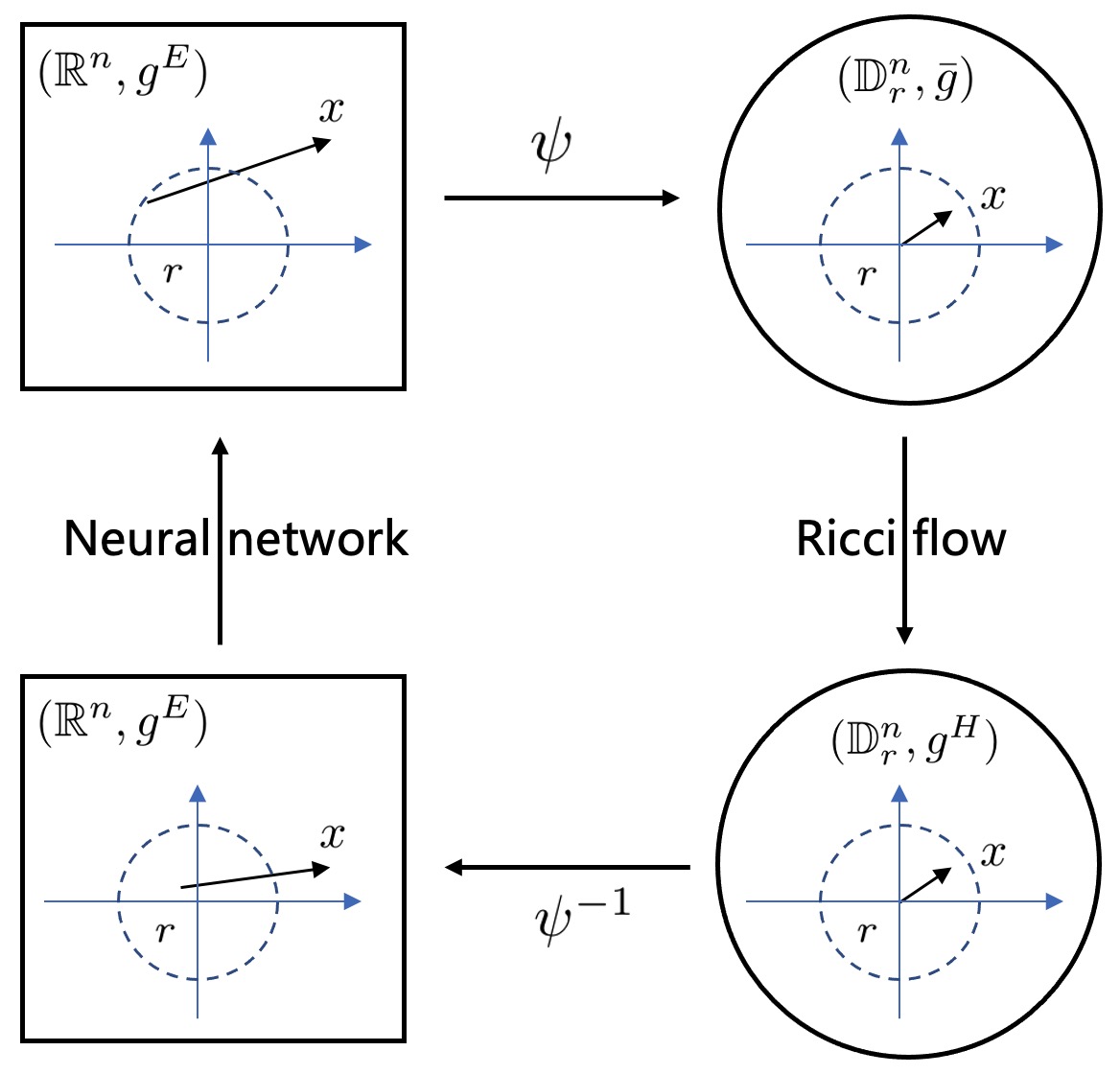}}
	\caption{An illustration of Ricci flow assisted Eucl2Hyp2Eucl neural networks. The map $\psi$ fulfils $\psi(\psi^{-1})=\psi^{-1}(\psi)=\operatorname{id}$.}
	\label{fig1}
\end{figure}

Recently, some studies have shown that the latent non-Euclidean structure in many data will affect the representation of Euclidean embeddings~\citep{bronstein2017geometric}. In such cases, the advantages of Hyperbolic embeddings are highlighted. Firstly, the Hyperbolic space provides more powerful or meaningful geometrical representations than the Euclidean space. Secondly, for the weight initialization of a neural network, the distribution is generally Gaussian~\citep{glorot2010understanding,he2015delving}, which is a manifold of constant negative curvature~\citep{amari2016information}, i.e., the Hyperbolic space~\citep{amari1987differential}. Obviously, the adoption of Hyperbolic embeddings in neural networks and deep learning has become very attractive.

For hierarchical, taxonomic or entailment data, Hyperbolic embeddings outperformed Euclidean embeddings in machine learning~\citep{sala2018representation,ganea2018hyperbolic}. For tree structure, the Euclidean space with infinite dimensions cannot be embedded with arbitrary distortion, but the Hyperbolic space with only 2 dimensions can preserve their metric~\citep{sala2018representation}. As for basic operations (e.g. matrix addition, matrix-vector multiplication, etc.) in the Hyperbolic space, Ganea et al. gave appropriate deep learning tools~\citep{ganea2019hyperbolic}. 

Despite the successful application of Hyperbolic embeddings in deep neural networks, there are two other research areas that have not yet been involved, i.e., how to {\bf avoid the perturbation} of the parameter update on the stability of the Hyperbolic space and how to {\bf combine the common advantages} of the Euclidean space and Hyperbolic space. In this paper, we propose Ricci flow assisted Eucl2Hyp2Eucl neural networks by mapping neural manifolds from the Euclidean space to the dynamically stable Poincar\'e ball and then back to the Euclidean space. Specifically, the Ricci flow can naturally make the Riemannian manifold converge by evolving metrics and provide nonlinearity for alternating Euclidean and Hyperbolic spaces (the lack of Ricci flow causes Eucl2Hyp2Eucl neural networks to be indistinguishable from their all Euclidean counterparts).

For the Ricci flow, Ye has considered stability of negatively curved manifolds on compact spaces~\citep{ye1993ricci}. Suneeta has investigated linearised stability of the Hyperbolic space under the Ricci flow~\citep{suneeta2009investigating}. Li et al. have proven stability of the Hyperbolic space in dimensions $n \geq 6$ when the deviation of the curvature of the initial metric from Hyperbolic space exponentially decays~\citep{li2010stability}. Based on the above work, Schn{\"u}rer et al. yielded stability of the Hyperbolic space in dimensions $n \geq 4$ when the initial metric is close to the Hyperbolic metric~\citep{schnurer2010stability}. A series of works have shown that, in certain cases, the Ricci flow can be used to eliminate the influence of a perturbation in the Hyperbolic space.

There are two main contributions in this paper. {\bf On the one hand}, we prove that, when initial metrics $\bar{g}$ is close to the Hyperbolic metric $g^H$ on the Poincar\'e ball (Definition~\ref{def2}), the scaled Ricci-DeTurck flow is exponential convergence for an $L^2$-norm perturbation (Lemma~\ref{lem3}). Apparently, such results are very meaningful, which shows that the application of the Ricci flow can help Poincar\'e embedded neural network manifolds to eliminate an $L^2$-norm perturbation for the metric. The Ricci flow guarantees the dynamical stability of neural manifolds during the training of neural networks with respect to the input data with transforms.

{\bf On the other hand}, furthermore, we propose Ricci flow assisted Eucl2Hyp2Eucl neural networks that have been in an alternate state between the Euclidean space $\mathbb{R}^n$ and the dynamically stable Poincar\'e ball $\mathbb{D}_{r}^{n}$. The illustration is shown in Figure~\ref{fig1}. Specifically, we map the neural network (for the output before the \textbf{softmax}) from the Euclidean space $(\mathbb{R}^n,g^E)$ to the Poincar\'e ball $(\mathbb{D}_{r}^{n},\bar{g})$ with an $L^2$-norm perturbation. Then, the Ricci flow is used to exponentially converge to the Poincar\'e ball $(\mathbb{D}_{r}^{n},g^H)$ without an $L^2$-norm perturbation. Finally, we map the neural network from the Poincar\'e ball $(\mathbb{D}_{r}^{n},g^H)$ to the Euclidean space $(\mathbb{R}^n,g^E)$. In general, Eucl2Hyp2Eucl neural networks take into account the common advantages of Euclidean and Poincar\'e embeddings, i.e., simply convenience and meaningful geometrical representations. The algorithm is shown in Algorithm~\ref{alg}.

The rest of this paper is organized as follows. Section~\ref{section2} summarizes basic works on Ricci flow. The proofs of the convergence of the Poincar\'e ball under Ricci flow are presented in Section~\ref{section3}. In Section~\ref{section4}, we yield the illustration of Ricci flow assisted Eucl2Hyp2Eucl neural networks. For the performance on classification tasks, we compare Ricci flow assisted neural networks with their all Euclidean counterparts in Section~\ref{section5}. The conclusion is given in Section~\ref{section6}.

\section{Ricci Flow}
\label{section2}

For a Riemannian manifold $\mathcal{M}$ with metric $g_0$, the Ricci flow was introduced by Hamilton to prove Thurston’s Geometrization Conjecture and consequently the Poincar\'e Conjecture~\citep{hamilton1982three}. The Ricci flow is a partial differential equation that evolves the metric:
\begin{equation}
\begin{aligned}
\frac{\partial}{\partial t} g(t) &=-2 \operatorname{Ric}(g(t)) \\
g(0) &=g_{0}
\end{aligned}
\label{ricci}
\end{equation}
where $g(t)$ is a time-dependent Riemannian metric and $\operatorname{Ric}()$ denotes the Ricci curvature tensor whose definition can be found in Appendix~\ref{app1}.

The idea is to try to evolve the Riemannian metric in some way that make the manifold become more regular, which can also be understood as rounder from the topological structure point of view. This continuous process is known as manifold ``surgery".

\subsection{Short Time Existence}

If the Ricci flow is strongly parabolic, there exists a unique solution for a short time.
\begin{theorem}
	\label{thm1}
	When $u: \mathcal{M}\times[0, T) \rightarrow \mathcal{E}$ is a time-dependent section of the vector bundle $\mathcal{E}$ where $\mathcal{M}$ is some Riemannian manifold, if the system of the Ricci flow is strongly parabolic at $u_0$ then there exists a solution on some time interval $[0, T)$, and the solution is unique for as long as it exists.
\end{theorem}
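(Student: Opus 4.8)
The plan is to recognize the statement as the standard short-time existence and uniqueness theorem for quasilinear strongly parabolic systems, specialized to the geometric setting. First I would rewrite the evolution equation abstractly as $\partial_t u = F(u)$, where $F$ is a second-order quasilinear differential operator acting on time-dependent sections of $\mathcal{E}$, and compute the linearization $DF(u_0)$ together with its principal symbol $\sigma[DF(u_0)](\xi) \in \operatorname{End}(\mathcal{E}_x)$ for each $x \in \mathcal{M}$ and each cotangent covector $\xi \in T_x^*\mathcal{M}$. The hypothesis that the system is strongly parabolic at $u_0$ says precisely that there is a constant $\delta > 0$ with $\langle \sigma[DF(u_0)](\xi)\,h, h\rangle \geq \delta\,|\xi|^2\,|h|^2$ uniformly in $x$, $\xi$, $h$, i.e. the symbol is uniformly positive definite; this Gårding-type positivity is the analytic input on which the whole argument rests.

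Second, I would invoke the linear theory: an operator whose principal symbol satisfies this positivity generates a strongly continuous analytic semigroup on a suitable scale of Banach spaces of sections --- parabolic Hölder spaces $C^{2+\alpha,1+\alpha/2}$, or $L^2$-Sobolev spaces $H^s$ with $s$ large --- and the inhomogeneous linear Cauchy problem is uniquely solvable with the expected maximal-regularity estimates (Schauder / Ladyzhenskaya--Solonnikov--Ural'tseva estimates, or the abstract analytic-semigroup framework). Third, I would pass from the linear to the quasilinear problem by a fixed-point argument: freeze the coefficients of $F$ along a path $v(t)$ that stays close to the constant path $u_0$, solve the resulting linear equation to obtain $\Phi(v)$, and show that on a sufficiently short interval $[0,T)$ the map $v \mapsto \Phi(v)$ is a contraction on a small closed ball of the path space. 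The contraction estimate uses the local Lipschitz dependence of the coefficients of $F$ on $u$ together with the smoothing gained from the parabolic estimate; its fixed point is the desired solution. Uniqueness then follows by subtracting two solutions $u_1,u_2$: the difference $w = u_1 - u_2$ solves a \emph{linear} strongly parabolic equation with zero initial data, so a Grönwall estimate on $\tfrac{d}{dt}\|w(t)\|_{L^2}^2$ forces $w \equiv 0$ for as long as both solutions exist. A standard parabolic bootstrap (iterating interior Schauder estimates) finally upgrades the solution to $C^\infty$ in space and time.

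The main obstacle is the quasilinear fixed-point step: one must calibrate the function spaces and the radius/time of the ball so that (i) the linearized problems remain uniformly strongly parabolic for every frozen coefficient $v$ in the ball --- which requires $v$ to stay $C^0$-close to $u_0$ on $[0,T)$, forcing $T$ small --- and (ii) the contraction constant is genuinely $<1$, which must be balanced against the maximal-regularity constants and the Lipschitz bounds on $F$. In the geometric case $\mathcal{E} = \operatorname{Sym}^2 T^*\mathcal{M}$ with $F(g) = -2\operatorname{Ric}(g)$, this is exactly the classical difficulty that $-2\operatorname{Ric}$ is only \emph{weakly} parabolic because of diffeomorphism invariance; the theorem is deliberately phrased under the explicit hypothesis of strong parabolicity so that this obstruction is assumed away, and the route to applying it to $-2\operatorname{Ric}$ itself is DeTurck's trick, which conjugates the flow into a strictly parabolic one --- consistent with the Ricci--DeTurck flow used later in the paper.
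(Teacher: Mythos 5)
Your outline is correct and is essentially the same argument the paper relies on: the paper gives no proof of its own but defers entirely to the standard quasilinear parabolic theory of Ladyzhenskaya--Solonnikov--Ural'tseva, which is exactly the linearization--maximal-regularity--contraction--Gr\"onwall scheme you describe. Your closing remark that strong parabolicity is a hypothesis precisely because $-2\operatorname{Ric}$ itself is only weakly parabolic, with DeTurck's trick restoring strict parabolicity, matches the paper's subsequent discussion as well.
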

\begin{proof}
	The proofs can be found in \citep{ladyzhenskaia1988linear}.
\end{proof}

\begin{definition}
	\label{def1}
	The Ricci flow is strongly parabolic if there exists $\delta > 0$ such that for all covectors $\varphi \neq 0$ and all symmetric $h_{ij}=\frac{\partial g_{ij}(t)}{\partial t} \neq 0$, the principal symbol of $-2\operatorname{Ric}$ satisfies
	\[
	\begin{aligned}
	&[-2\operatorname{Ric}](\varphi)(h)_{ij} h^{ij} \\
	&=g^{p q}\left(\varphi_{p} \varphi_{q} h_{i j}+\varphi_{i} \varphi_{j} h_{p q}-\varphi_{q} \varphi_{i} h_{j p}-\varphi_{q} \varphi_{j} h_{i p}\right) h^{i j} \\
	&>\delta \varphi_{k} \varphi^{k} h_{r s} h^{r s}.
	\end{aligned}
	\]
\end{definition}
Since the above inequality cannot always be satisfied, the Ricci flow is not strongly parabolic. Empirically, one can not use Theorem~\ref{thm1} to prove the existence of the solution directly.

To understand which parts have an impact on its non-parabolic, one linearizes the Ricci curvature tensor.
\begin{lemma}
	\label{lem1}
	The linearization of $-2\operatorname{Ric}$ can be rewritten as
	\begin{equation}
	\begin{aligned}
	&D[-2 \operatorname{Ric}](h)_{i j}=g^{p q} \nabla_{p} \nabla_{q} h_{i j}+\nabla_{i} V_{j}+\nabla_{j} V_{i}+O(h_{ij}) \\
	&\operatorname{where} \;\;\;\;V_{i}=g^{p q}\left(\frac{1}{2} \nabla_{i} h_{p q}-\nabla_{q} h_{p i}\right).
	\end{aligned}
	\end{equation}
\end{lemma}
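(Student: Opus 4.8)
The plan is to differentiate the Ricci tensor along a one-parameter family of metrics $g(t)$ with $h_{ij}=\partial_t g_{ij}$ and to isolate the terms that are of second order in the derivatives of $h$, pushing everything else into the remainder $O(h_{ij})$. First I would use the fact that, although the Christoffel symbols are not tensorial, their variation $\delta\Gamma^k_{ij}$ is a genuine $(1,2)$-tensor, given in covariant form by $\delta\Gamma^k_{ij}=\tfrac12 g^{k\ell}\big(\nabla_i h_{j\ell}+\nabla_j h_{i\ell}-\nabla_\ell h_{ij}\big)$, with trace $\delta\Gamma^k_{ik}=\tfrac12 g^{k\ell}\nabla_i h_{k\ell}=\tfrac12\nabla_i H$ where $H=g^{pq}h_{pq}$. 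Combined with the local formula $R_{ij}=\partial_k\Gamma^k_{ij}-\partial_j\Gamma^k_{ik}+\Gamma^k_{k\ell}\Gamma^\ell_{ij}-\Gamma^k_{j\ell}\Gamma^\ell_{ik}$, this yields the Palatini-type identity $\delta R_{ij}=\nabla_k\big(\delta\Gamma^k_{ij}\big)-\nabla_j\big(\delta\Gamma^k_{ik}\big)$, in which the covariant derivatives already absorb the variation of the quadratic $\Gamma\Gamma$ terms.

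Second I would substitute the expression for $\delta\Gamma$ into this identity and expand. The piece $-\nabla_\ell h_{ij}$ inside $\delta\Gamma^k_{ij}$ contributes $-\tfrac12 g^{k\ell}\nabla_k\nabla_\ell h_{ij}=-\tfrac12 g^{pq}\nabla_p\nabla_q h_{ij}$; multiplying the whole identity by $-2$ turns this into exactly the rough-Laplacian term $g^{pq}\nabla_p\nabla_q h_{ij}$ of the statement. For the remaining pieces $\tfrac12 g^{k\ell}\big(\nabla_k\nabla_i h_{j\ell}+\nabla_k\nabla_j h_{i\ell}\big)$ coming from $\nabla_k(\delta\Gamma^k_{ij})$ and the term $-\tfrac12\nabla_j\nabla_i H$ coming from $-\nabla_j(\delta\Gamma^k_{ik})$, I would commute the covariant derivatives, writing $\nabla_k\nabla_i h_{j\ell}=\nabla_i\nabla_k h_{j\ell}+[\nabla_k,\nabla_i]h_{j\ell}$; the commutators are contractions of the Riemann tensor with $h$ and contain no derivatives of $h$, so they are collected into $O(h_{ij})$, while $g^{k\ell}\nabla_i\nabla_k h_{j\ell}=\nabla_i(\operatorname{div} h)_j$ with $(\operatorname{div} h)_i=g^{pq}\nabla_q h_{pi}$.

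Third, after this reorganization the non-remainder part of $-2\,\delta R_{ij}$ is $g^{pq}\nabla_p\nabla_q h_{ij}+\nabla_i\nabla_j H-\nabla_i(\operatorname{div} h)_j-\nabla_j(\operatorname{div} h)_i$. It remains to recognize the last three terms as $\nabla_i V_j+\nabla_j V_i$: setting $V_i=\tfrac12\nabla_i H-(\operatorname{div} h)_i=g^{pq}\big(\tfrac12\nabla_i h_{pq}-\nabla_q h_{pi}\big)$ and using that $\nabla_i\nabla_j H=\nabla_j\nabla_i H$ since $H$ is a scalar, one checks directly that $\nabla_i V_j+\nabla_j V_i=\nabla_i\nabla_j H-\nabla_i(\operatorname{div} h)_j-\nabla_j(\operatorname{div} h)_i$, which matches the definition of $V_i$ in the statement and closes the computation.

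The main obstacle is not conceptual but bookkeeping: keeping track of the several second-derivative terms, commuting covariant derivatives in the right order, and making sure every curvature term produced by those commutators — together with any genuinely lower-order contributions — carries no derivative of $h$, so that it legitimately belongs in $O(h_{ij})$. Pinning down the signs and, in particular, the coefficient $\tfrac12$ in front of $\nabla_i H$ in the definition of $V_i$ is the only place where real care is needed.
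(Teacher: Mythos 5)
Your proof is correct, and its core is the same as the paper's: commute covariant derivatives so that the trace and divergence pieces regroup as $\nabla_i V_j+\nabla_j V_i$, with every Riemann-tensor contraction of $h$ (carrying no derivatives of $h$) absorbed into $O(h_{ij})$. The only real difference is the starting point: the paper simply quotes the variational formula $D[\operatorname{Ric}](h)_{ij}=-\tfrac12 g^{pq}\left(\nabla_p\nabla_q h_{ij}+\nabla_i\nabla_j h_{pq}-\nabla_q\nabla_i h_{jp}-\nabla_q\nabla_j h_{ip}\right)$ as Definition~\ref{linear} and then reorganizes it, whereas you derive that formula from the tensorial variation $\delta\Gamma^k_{ij}=\tfrac12 g^{k\ell}\left(\nabla_i h_{j\ell}+\nabla_j h_{i\ell}-\nabla_\ell h_{ij}\right)$ together with the Palatini identity $\delta R_{ij}=\nabla_k(\delta\Gamma^k_{ij})-\nabla_j(\delta\Gamma^k_{ik})$. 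This buys you a more self-contained argument (the paper's starting formula is obtained rather than assumed) at the cost of slightly more bookkeeping; your identification $V_i=\tfrac12\nabla_i H-(\operatorname{div}h)_i$ with $H=g^{pq}h_{pq}$, the use of $\nabla_i\nabla_j H=\nabla_j\nabla_i H$, and the signs and the coefficient $\tfrac12$ all check out against the statement.
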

\begin{proof}
	The proofs can be found in Appendix~\ref{app21}.
\end{proof}

In particular, the term $O(h_{ij})$ will have no contribution to the principal symbol of $-2 \operatorname{Ric}$. For convenience of our discussion, we just ignore this term. By carefully observing the above equation, one finds that the impact on the non-parabolic of the Ricci flow comes from the terms in $V$, not the term $g^{p q} \nabla_{p} \nabla_{q} h_{i j}$. The solution is followed by the DeTurck Trick~\citep{deturck1983deforming} that has a time-dependent reparameterization of the manifold:
\begin{equation}
\begin{aligned}
\frac{\partial}{\partial t}\bar{g}(t)&=-2 \operatorname{Ric}(\bar{g}(t))- \mathcal{L}_{\frac{\partial \varphi(t)}{\partial t}} \bar{g}(t) \\
\bar{g}(0) &=\bar{g}_0,
\end{aligned}
\label{deturck}
\end{equation}
See Appendix~\ref{app22} for details. By choosing $\frac{\partial \varphi(t)}{\partial t}$ to cancel the effort of the terms in $V$, the reparameterized Ricci flow is strongly parabolic. Thus, one gets that the Ricci-DeTurck flow~\footnote{Based on \citep{sheridan2006hamilton}, we have
	\[
	\left(\mathcal{L}_{\frac{\partial \varphi(t)}{\partial t}} \bar{g}(t)\right)_{ij}= - \nabla_i W_j - \nabla_j W_i.
	\]
	Therefore, we obtain another expression of the Ricci-DeTurck flow
	\[
	\begin{aligned}
	\frac{\partial}{\partial t}\bar{g}(t)&=-2 \operatorname{Ric}(\bar{g}(t)) + \nabla_i W_j + \nabla_j W_i \\
	\bar{g}(0) &=\bar{g}_0, \quad \text { where } W_{i}=g^{p q} g_{i j}\left(\Gamma_{p q}^{j}-\tilde{\Gamma}_{p q}^{j}\right).
	\end{aligned}
	\]
} has a unique solution for a short time.

\subsection{Curvature Explosion at Singularity}

Subsequently, we will present the behavior of the Ricci flow in finite time and show that the evolution of the curvature tends to develop singularities. Before giving the core demonstration, Theorem~\ref{thm4}, some foreshadowing proofs need to be prepared.

\begin{theorem}
	\label{thm2}
	Given a smooth Riemannian metric $g_0$ on a closed manifold $\mathcal{M}$, there exists a maximal time interval $[0, T)$ such that a solution $g(t)$ of the Ricci flow, with $g(0) = g_0$, exists and is smooth on $[0, T)$, and this solution is unique.
\end{theorem}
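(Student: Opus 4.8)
The plan is to build the maximal solution by patching together short-time solutions, using the short-time existence and uniqueness machinery already assembled above. First I would invoke short-time existence: by the DeTurck trick, the Ricci flow \eqref{ricci} is equivalent --- up to the time-dependent diffeomorphisms generated by the vector field $W$ of the footnote, which solve an ODE --- to the strongly parabolic Ricci-DeTurck flow \eqref{deturck}, so Theorem~\ref{thm1} furnishes a unique smooth solution $\bar g(t)$ on some interval $[0,\varepsilon)$ with $\bar g(0)=g_0$; pulling $\bar g$ back by the diffeomorphisms yields a smooth solution $g(t)$ of the Ricci flow on $[0,\varepsilon)$. Hence the set $\mathcal{T}=\{\tau>0:\ \text{a smooth Ricci flow solution with }g(0)=g_0\text{ exists on }[0,\tau)\}$ is nonempty, and I set $T=\sup\mathcal{T}\in(0,\infty]$.

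Next I would establish uniqueness on any common interval. Suppose $g_1(t)$ and $g_2(t)$ are smooth solutions on $[0,\tau)$ with $g_1(0)=g_2(0)=g_0$. Because the Ricci flow itself is only weakly parabolic, uniqueness does not follow directly from Theorem~\ref{thm1}; instead one runs the DeTurck reduction canonically. Concretely, couple each $g_i$ to the harmonic map heat flow into a fixed background metric: this produces solutions $\bar g_i(t)$ of the Ricci-DeTurck flow with the same initial data, which must coincide by the parabolic uniqueness of Theorem~\ref{thm1}; the associated families of diffeomorphisms then agree as well, forcing $g_1=g_2$. On the closed manifold $\mathcal{M}$ this is Hamilton's original argument~\citep{hamilton1982three}. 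With uniqueness in hand, any two solutions arising in $\mathcal{T}$ agree on the overlap of their domains, so taking the union over all $\tau\in\mathcal{T}$ yields a single well-defined smooth solution $g(t)$ on $[0,T)$.

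Finally I would check maximality. If $T=\infty$ there is nothing further to prove. If $T<\infty$, then by the very definition of $T$ as a supremum no smooth solution exists on $[0,\tau)$ for any $\tau>T$, so $[0,T)$ is the maximal interval of existence, and uniqueness of $g(t)$ on $[0,T)$ is inherited from the uniqueness just established.

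The step I expect to be the main obstacle is precisely the uniqueness claim: since the Ricci flow is not strongly parabolic, Theorem~\ref{thm1} does not apply to it directly, and one must make the correspondence between the Ricci flow and the Ricci-DeTurck flow canonical --- this is exactly where the harmonic map heat flow coupling (equivalently, Hamilton's diffeomorphism-tracking argument) is needed. The remaining parts are the routine "glue short-time solutions along their overlaps and take the supremal time" argument.
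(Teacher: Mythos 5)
Your proposal is correct and follows essentially the same route as the paper, which simply defers to the standard argument in \citep{sheridan2006hamilton}: short-time existence via the DeTurck reduction to the strongly parabolic flow, uniqueness via the canonical (harmonic map heat flow) correspondence between Ricci and Ricci--DeTurck solutions, and maximality by gluing solutions and taking the supremum of existence times. You also correctly identify the only delicate point, namely that uniqueness cannot be read off from Theorem~\ref{thm1} directly because the Ricci flow itself is only weakly parabolic.
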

\begin{proof}
	The proofs can be found in \citep{sheridan2006hamilton}.
\end{proof}

\begin{theorem}
	\label{thm3}
	Let $\mathcal{M}$ be a closed manifold and $g(t)$ a smooth time-dependent metric on $\mathcal{M}$, defined for $t \in [0, T)$. If there exists a constant $C < \infty$ for all $x \in \mathcal{M}$ such that
	\begin{equation}
	\int_{0}^{T}\left|\frac{\partial}{\partial t} g_x(t)\right|_{g(t)} d t  \leq C,
	\end{equation}
	then the metrics $g(t)$ converge uniformly as $t$ approaches $T$ to a continuous metric $g(T)$ that is uniformly equivalent to $g(0)$ and satisfies
	\[
	e^{-C} g_x(0) \leq g_x(T) \leq e^C g_x(0).
	\]
\end{theorem}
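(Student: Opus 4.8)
The plan is to reduce the statement to a one-dimensional differential inequality along each fixed tangent direction, read off pointwise convergence and the two-sided bound from it, and then use the compactness of $\mathcal{M}$ together with the uniform-in-$x$ bound to upgrade pointwise convergence to uniform convergence.

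First I would fix $x\in\mathcal{M}$ and a nonzero $v\in T_x\mathcal{M}$ and track the scalar function $\phi_v(t):=g_x(t)(v,v)$, which is strictly positive on $[0,T)$ because $g(t)$ is a metric there. Since $v$ is $t$-independent, $\tfrac{d}{dt}\phi_v(t)=\bigl(\tfrac{\partial}{\partial t}g_x(t)\bigr)(v,v)$, and the elementary inequality $|h(v,v)|\le|h|_{g}\,|v|_{g}^{2}$ for a symmetric $2$-tensor $h$ --- obtained by diagonalising $h$ in a $g$-orthonormal frame and comparing the operator norm with the Hilbert--Schmidt norm defining $|h|_{g}$ --- yields, after dividing by $\phi_v(t)=|v|_{g(t)}^{2}$,
\begin{equation}
\left|\frac{d}{dt}\log\phi_v(t)\right|=\frac{\bigl|\tfrac{d}{dt}\phi_v(t)\bigr|}{\phi_v(t)}\le\left|\frac{\partial}{\partial t}g_x(t)\right|_{g(t)}.
\label{eq:logderiv-thm3}
\end{equation}
Integrating \eqref{eq:logderiv-thm3} over $[s,t]\subset[0,T)$ gives $\bigl|\log\phi_v(t)-\log\phi_v(s)\bigr|\le\int_s^t|\tfrac{\partial}{\partial t}g_x(\tau)|_{g(\tau)}\,d\tau$. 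The hypothesis makes $\tau\mapsto|\tfrac{\partial}{\partial t}g_x(\tau)|_{g(\tau)}$ integrable on $[0,T)$, so its tail integrals tend to $0$; hence $\log\phi_v(t)$ is Cauchy as $t\to T$ and converges, so $\phi_v(t)$ converges to some $\phi_v(T)>0$. By polarization ($g_x(t)(v,w)=\tfrac14[\phi_{v+w}(t)-\phi_{v-w}(t)]$) the bilinear forms $g_x(t)$ converge pointwise to a symmetric bilinear form $g_x(T)$. Taking $s=0$ and $t\to T$ gives $\bigl|\log\frac{g_x(T)(v,v)}{g_x(0)(v,v)}\bigr|\le C$ for every $v$, i.e. $e^{-C}g_x(0)\le g_x(T)\le e^{C}g_x(0)$ as quadratic forms; in particular $g_x(T)$ is positive definite, hence a genuine metric, uniformly equivalent to $g(0)$.

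For the uniformity and continuity claims I would use that $\mathcal{M}$ is compact and $C$ is uniform in $x$. For unit $v$ (with respect to $g(0)$) one has $\phi_v(t),\phi_v(T)\in[e^{-C},e^{C}]$, so the last display upgrades to $|g_x(t)(v,v)-g_x(T)(v,v)|\le e^{C}\int_t^T|\tfrac{\partial}{\partial t}g_x(\tau)|_{g(\tau)}\,d\tau$, and it suffices to show the tails $\int_t^T|\tfrac{\partial}{\partial t}g_x(\tau)|_{g(\tau)}\,d\tau$ vanish uniformly in $x$. This follows by a Dini-type argument: the functions $G_x(s):=\int_0^s|\tfrac{\partial}{\partial t}g_x(\tau)|_{g(\tau)}\,d\tau$ are continuous in $(x,s)$, increase monotonically in $s$ to $G_x(T)\le C$, and on the compact $\mathcal{M}$ Dini's theorem then forces $\sup_x(G_x(T)-G_x(s))\to0$. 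Hence $g(t)\to g(T)$ uniformly on $\mathcal{M}$; a uniform limit of continuous tensor fields is continuous, so $g(T)$ is a continuous metric, which completes the proof.

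I expect the passage from pointwise to uniform convergence to be the only real obstacle: the first two paragraphs are a direct ODE-type estimate, but running the Dini argument requires the joint continuity of $G_x(s)$ in $(x,s)$ together with continuity of the limit $x\mapsto G_x(T)$, which is exactly where the uniform-in-$x$ bound and compactness of $\mathcal{M}$ are genuinely used; equivalently, the needed input can be phrased as an $L^1$-in-time bound $\int_0^T\sup_{x}|\tfrac{\partial}{\partial t}g_x(t)|_{g(t)}\,dt<\infty$, whose tails control the convergence uniformly. If only the two-sided estimate $e^{-C}g_x(0)\le g_x(T)\le e^{C}g_x(0)$ together with pointwise convergence is wanted, the first two paragraphs already suffice.
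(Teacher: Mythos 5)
Your first two paragraphs are, up to packaging, the paper's own proof (Appendix B.3): fix $x$ and a direction $V$, bound $\left|\log\left(g_x(t_0)(V,V)/g_x(0)(V,V)\right)\right|$ by $\int_0^{t_0}|\partial_t g_x|_{g(t)}\,dt\le C$, exponentiate to get the two-sided bound and uniform equivalence, and get the limit from absolute integrability of $\partial_t g_x$ (the paper writes $g_x(T)-g_x(t)=\int_t^T\partial_s g_x\,ds$ and estimates the tail in the $g(0)$-norm; your Cauchy-plus-polarization formulation is an equivalent way of saying the same thing). So for the pointwise convergence and the estimate $e^{-C}g_x(0)\le g_x(T)\le e^{C}g_x(0)$ your argument matches the paper and is correct.

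The one place you depart from the paper is the uniformity step, and there your Dini argument does not close as written --- as you yourself flag. Dini's theorem needs the limit $x\mapsto G_x(T)$ to be continuous, which is precisely what is not yet known: uniform convergence is what would deliver continuity of the limit, so the argument is circular. Moreover the stated hypothesis (one constant $C$ bounding each $x$-wise integral) genuinely does not force uniform tail decay: for a conformal family $g(x,t)=u(x,t)\,g_0$ one has $|\partial_t g|_{g(t)}=\sqrt{n}\,|\partial_t\log u|$, and one can arrange a smooth $u$ on $\mathcal{M}\times[0,T)$ whose $t$-derivative carries unit mass in a bump occurring at times tending to $T$ as $x$ approaches a point $x_0$; every $x$-wise integral is bounded by $\sqrt{n}$, yet the tails do not vanish uniformly and the pointwise limit metric is discontinuous at $x_0$. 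Your proposed repair --- placing the supremum inside the integral, $\int_0^T\sup_x|\partial_t g_x|_{g(t)}\,dt<\infty$ (Hamilton's original hypothesis), or equivalently assuming uniform tail decay --- is exactly the right one. To be fair, the paper's proof is no more rigorous here: it jumps from pointwise tail decay to uniform convergence by citing compactness of $\mathcal{M}$, and it also silently uses the uniform equivalence (a factor such as $e^{C}$) to trade the $g(t)$-norm for the $g(0)$-norm in the tail. In the only application made of the theorem (Corollary~\ref{cor1}), $|\partial_t g|_{g(t)}=2|\operatorname{Ric}|_{g(t)}$ is bounded uniformly in $x$ and $t$, so the tails are $O(T-t)$ uniformly, the strengthened hypothesis holds automatically, and both your argument and the paper's go through.
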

\begin{proof}
	The proofs can be found in Appendix~\ref{app23}.
\end{proof}

\begin{corollary}
	\label{cor1}
	Let $(\mathcal{M}, g(t))$ be a solution of the Ricci flow on a closed manifold. If $|\operatorname{Rm}|_{g(t)}$ is bounded on a finite time $[0, T)$, then $g(t)$ converges uniformly as $t$ approaches $T$ to a continuous metric
	$g(T)$ which is uniformly equivalent to $g(0)$.
\end{corollary}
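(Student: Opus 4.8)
The plan is to deduce the corollary directly from Theorem~\ref{thm3}: the only thing to do is manufacture, out of the curvature hypothesis, a finite bound on $\int_0^T \left|\frac{\partial}{\partial t} g_x(t)\right|_{g(t)} dt$ and then quote that theorem verbatim. Along the Ricci flow one has $\frac{\partial}{\partial t} g(t) = -2\operatorname{Ric}(g(t))$, so at every point $x \in \mathcal{M}$ and every $t \in [0,T)$, $\left|\frac{\partial}{\partial t} g_x(t)\right|_{g(t)} = 2\,|\operatorname{Ric}(g(t))|_{g(t)}(x)$.

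First I would record the elementary linear-algebra fact that the Ricci tensor is a metric contraction of the Riemann tensor, $\operatorname{Ric}_{ij} = g^{pq}\operatorname{Rm}_{pijq}$, whence a pointwise estimate $|\operatorname{Ric}|_{g(t)} \leq c_n\,|\operatorname{Rm}|_{g(t)}$ holds with a constant $c_n$ depending only on $n = \dim\mathcal{M}$ (it comes from applying Cauchy--Schwarz to the contraction; the key point is that raising/lowering indices with $g(t)$ and then measuring the $g(t)$-norm is metric-covariant, so $c_n$ does not in fact depend on $t$ or on the particular metric $g(t)$). Next I would use the hypothesis in the form it is meant: there is a single constant $K < \infty$ with $|\operatorname{Rm}|_{g(t)}(x) \leq K$ for all $x \in \mathcal{M}$ and all $t \in [0,T)$. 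Combining the two gives $\left|\frac{\partial}{\partial t} g_x(t)\right|_{g(t)} \leq 2 c_n K$, uniformly in $x$ and $t$.

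Since the time interval is finite, integrating this uniform bound over $[0,T)$ yields
\[
\int_{0}^{T}\left|\frac{\partial}{\partial t} g_x(t)\right|_{g(t)} dt \;\leq\; 2 c_n K T \;=:\; C \;<\; \infty,
\]
with $C$ independent of $x \in \mathcal{M}$. This is precisely the hypothesis of Theorem~\ref{thm3}, whose conclusion then gives that $g(t)$ converges uniformly as $t \to T$ to a continuous metric $g(T)$ with $e^{-C} g_x(0) \leq g_x(T) \leq e^{C} g_x(0)$, i.e. $g(T)$ is uniformly equivalent to $g(0)$ — exactly the assertion of the corollary.

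I do not expect any real obstacle here; the corollary is a routine specialization of Theorem~\ref{thm3}. The two places deserving a moment's care are (i) confirming that the contraction constant $c_n$ in $|\operatorname{Ric}| \leq c_n |\operatorname{Rm}|$ is genuinely dimensional and not secretly dependent on $g(t)$, and (ii) reading the hypothesis ``$|\operatorname{Rm}|_{g(t)}$ is bounded on $[0,T)$'' as a bound uniform over $\mathcal{M} \times [0,T)$, since it is that uniformity together with $T < \infty$ that makes the time integral finite.
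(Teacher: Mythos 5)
Your proof is correct and follows exactly the paper's route: bound $|\operatorname{Ric}|$ by the assumed bound on $|\operatorname{Rm}|$, convert this via the Ricci flow equation into a uniform bound on $\left|\frac{\partial}{\partial t}g_x(t)\right|_{g(t)}$, integrate over the finite interval $[0,T)$, and invoke Theorem~\ref{thm3}. Your write-up merely makes explicit the dimensional constant and the uniformity in $x$ that the paper leaves implicit.
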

\begin{proof}
	The bound on $|\operatorname{Rm}|_{g(t)}$ implies one on $|\operatorname{Ric}|_{g(t)}$. Based on Eq.~\ref{ricci}, we can extend the bound on $|\frac{\partial}{\partial t}g(t)|_{g(t)}$. Therefore, we obtain an integral of a bounded quantity over a finite interval is also bounded, by Theorem~\ref{thm3}.
\end{proof}

\begin{theorem}
	\label{thm4}
	If $g_0$ is a smooth metric on a compact manifold $\mathcal{M}$, the Ricci flow with $g(0) = g_0$
	has a unique solution $g(t)$ on a maximal time interval $t\in [0, T)$. If $T < \infty$, then
	\begin{equation}
	\lim _{t \rightarrow T}\left(\sup _{x \in \mathcal{M}}|\operatorname{Rm}_x(t)|\right)=\infty.
	\end{equation}
\end{theorem}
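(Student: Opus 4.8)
The plan is to argue by contradiction, using the standard Hamilton-type continuation argument: assuming $T<\infty$ while the curvature does \emph{not} blow up, I will show the flow extends smoothly past $T$, contradicting the maximality guaranteed by Theorem~\ref{thm2}. So suppose $\lim_{t\to T}\sup_{x}|\operatorname{Rm}_x(t)|\neq\infty$. Then there exist a constant $M$ and a sequence $t_k\nearrow T$ with $\sup_{x}|\operatorname{Rm}_x(t_k)|\le M$. First I would upgrade this to a \emph{uniform} bound on $[0,T)$. Under the Ricci flow the curvature tensor satisfies a reaction--diffusion inequality of the shape $\partial_t|\operatorname{Rm}|^2\le\Delta|\operatorname{Rm}|^2-2|\nabla\operatorname{Rm}|^2+c_n|\operatorname{Rm}|^3$, and feeding this into the parabolic maximum principle yields a doubling-time estimate: there is $c=c(n)>0$ so that if $\sup_x|\operatorname{Rm}_x(t_0)|=K$ then $\sup_x|\operatorname{Rm}_x(t)|\le 2K$ on $[t_0,t_0+c/K]$, for as long as the flow exists. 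Applying this at each $t_k$ with $K\le M$ and letting $t_k\to T$ gives $\sup_x|\operatorname{Rm}_x(t)|\le 2M$ on $[t_k,T)$ for $k$ large, hence on all of $[0,T)$ after handling the complementary compact subinterval by continuity. In particular $|\partial_t g(t)|_{g(t)}=2|\operatorname{Ric}(g(t))|_{g(t)}$ is uniformly bounded, so Theorem~\ref{thm3} (equivalently Corollary~\ref{cor1}) applies and $g(t)$ converges uniformly as $t\to T$ to a continuous metric $g(T)$ that is uniformly equivalent to $g(0)$.

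Next I would promote this $C^0$-convergence to $C^\infty$-convergence, which is the analytically substantive step. The uniform bound on $|\operatorname{Rm}|$ over $[0,T)$ feeds into the Bernstein--Bando--Shi derivative estimates: for every $m\ge 1$ there is $C_m<\infty$ with $|\nabla^m\operatorname{Rm}|_{g(t)}\le C_m$ on $[\tau,T)$ for any fixed $\tau\in(0,T)$, proved by deriving the evolution equations of $|\nabla^m\operatorname{Rm}|^2$ and running a weighted maximum-principle induction. Together with the evolution equation for $g$ and the uniform equivalence of all the metrics $g(t)$, an induction on the order of differentiation (comparing the Levi-Civita connection of $g(t)$ with that of a fixed background metric and controlling the difference tensor and its covariant derivatives) shows that for every $k$ the family $\{g(t)\}$ is Cauchy in $C^k$ as $t\to T$. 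Hence $g(T)$ is in fact a smooth Riemannian metric and $g(t)\to g(T)$ in $C^\infty$.

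Finally I would restart the flow. By short-time existence (Theorem~\ref{thm2}, which rests on Theorem~\ref{thm1} together with the DeTurck trick), applied now to the smooth initial metric $g(T)$ on the compact manifold $\mathcal{M}$, there is a smooth solution $\tilde g(t)$ of the Ricci flow on $[T,T+\varepsilon)$ with $\tilde g(T)=g(T)$. Concatenating $g(t)$ on $[0,T]$ with $\tilde g(t)$ on $[T,T+\varepsilon)$, and using the $C^\infty$-convergence at $t=T$ together with the uniqueness clause of Theorem~\ref{thm2} to see that the glued family is a genuine smooth solution, one obtains a smooth Ricci flow on $[0,T+\varepsilon)$ with initial datum $g_0$. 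This contradicts the maximality of $[0,T)$, so if $T<\infty$ then necessarily $\lim_{t\to T}\sup_{x\in\mathcal{M}}|\operatorname{Rm}_x(t)|=\infty$.

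The main obstacle is the second step: establishing the all-order a priori estimates $|\nabla^m\operatorname{Rm}|\le C_m$ and converting them into genuine $C^\infty$-convergence, hence smoothness of the limit metric $g(T)$. The results quoted in the excerpt (Theorem~\ref{thm3} and Corollary~\ref{cor1}) only deliver a continuous limit metric, whereas restarting the flow through Theorem~\ref{thm2} requires a smooth initial datum; bridging that regularity gap via the Shi-type estimates is where the real work lies. A secondary technical point is the doubling-time estimate needed to pass from "the curvature does not tend to infinity" to a uniform bound on $[0,T)$.
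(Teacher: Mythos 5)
Your argument is the same contradiction--continuation strategy as the paper's own proof: a uniform curvature bound gives uniform convergence of $g(t)$ to a limit metric via Theorem~\ref{thm3}/Corollary~\ref{cor1}, the flow is restarted from $g(T)$ by short-time existence (Theorem~\ref{thm2}), and this contradicts maximality of $[0,T)$. Your write-up is in fact more careful than the paper's, which silently treats the continuous limit metric of Corollary~\ref{cor1} as smooth; your doubling-time step and the Bernstein--Bando--Shi derivative estimates supply exactly the details the paper leaves implicit.
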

\begin{proof}
	For a contradiction, we assume that $|\operatorname{Rm}_x(t)|$ is bounded by a constant. It follows from Corollary~\ref{cor1} that the metrics $g(t)$ converge uniformly in the norm induced by $g(t)$ to a smooth metric $g(T)$. Based on Theorem~\ref{thm2}, it is possible to find a solution to the Ricci flow on $t \in [0, T)$ because the smooth metric $g(T)$ is uniformly equivalent to initial metric $g(0)$.
	
	Hence, one can extend the solution of the Ricci flow after the time point $t=T$, which is the result for continuous derivatives at $t=T$. Naturally, the time $T$ of existence of the Ricci flow has not been maximal, which contradicts our assumption. In other words, $|\operatorname{Rm}_x(t)|$ is unbounded.
\end{proof}

According to Theorem~\ref{thm4}, the Riemann curvature $|\operatorname{Rm}|_{g(t)}$ becomes divergent and tends to explode, as approaching the singular time $T$.

\section{The Poincar\'e Ball under Ricci Flow}
\label{section3}

\subsection{Basics of Hyperbolic Space and The Poincar\'e Ball}

The Hyperbolic space has several isometric models~\citep{anderson2006hyperbolic}. In this paper, similarly as in \citep{nickel2017poincare} and \citep{ganea2018hyperbolic} , we choose an $n$-dimensional Poincar\'e ball $\mathbb{D}_{r}^{n}$ with radius $1/\sqrt{r}$. 

Empirically, the Poincar\'e ball can be defined by the background manifold $\mathbb{D}_{r}^{n}:=\left\{x \in \mathbb{R}^{n} \mid r\|x\|^{2}<1\right\}$ endowed with the Hyperbolic metric:
\begin{equation}
g_{x}^{H}=\lambda_{x}^{2} g^{E}, \quad \text { where } \lambda_{x}:=\frac{2}{1-r\|x\|^{2}}.
\label{hyperbolic}
\end{equation}
Note that the Euclidean metric $g^{E}$ is equal to the identity matrix $\boldsymbol{I}$. For $r>0$, $\mathbb{D}_{r}^{n}$ denotes the open ball of radius $1/\sqrt{r}$. In particular, if $r=0$, then one recovers the Euclidean space $\mathbb{D}_{0}^{n}=\mathbb{R}^{n}$.

By Corollary~\ref{cor2}, The Riemannian gradient endowed with $g_{x}^{H}$ for any point $x \in \mathbb{D}^{n}$ is known to be given by~\footnote{Note that the Riemannian gradient is similar to the natural gradient~\citep{martens2015optimizing,martens2020new} in Riemannian manifold defined by the KL divergence.}
\begin{equation}
\partial_{x}^{H}=\frac{1}{\lambda_{x}^{2}} \partial^{E}, \quad \text { where } \lambda_{x}:=\frac{2}{1-r\|x\|^{2}}.
\end{equation}

\begin{corollary}
	\label{cor2}
	In view of information geometry~\citep{amari2016information}, the steepest descent direction in a Riemannian manifold endowed with $g$ satisfies
	\begin{equation}
	\partial^{g}=g^{-1} \partial^{E},
	\end{equation}
	with respect to the steepest descent direction $\partial^{E}$ in Euclidean space.
\end{corollary}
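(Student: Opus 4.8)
The plan is to obtain the claimed identity from the variational characterisation of steepest descent, using the fact that a Riemannian metric $g$ is precisely the object that converts the differential of a function (a covector) into a gradient vector. First I would fix a smooth $f$ and recall that, in a coordinate chart, the Euclidean gradient $\partial^{E}f=(\partial_{1}f,\dots,\partial_{n}f)^{\top}$ is the object governing the first-order change $f(x+d)\approx f(x)+\langle \partial^{E}f,\,d\rangle$ for a displacement $d$, where $\langle\cdot,\cdot\rangle$ is the Euclidean inner product.

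Next I would set up the constrained problem that defines the steepest descent direction on $(\mathcal{M},g)$: among all displacements of a fixed Riemannian length, $d^{\top}g\,d=\varepsilon^{2}$, find the one minimising the linearised value $\langle \partial^{E}f,\,d\rangle$. Introducing a Lagrange multiplier $\mu$ and differentiating $\langle \partial^{E}f,\,d\rangle-\mu\,(d^{\top}g\,d-\varepsilon^{2})$ in $d$ gives $\partial^{E}f=2\mu\,g\,d$, hence $d=\tfrac{1}{2\mu}\,g^{-1}\partial^{E}f$; since $\mu$ only fixes the step length, the direction itself is $\partial^{g}f=g^{-1}\partial^{E}f$, as asserted. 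Equivalently, one may invoke the defining property $\mathrm{d}f(v)=g(\partial^{g}f,v)$ for every tangent vector $v$ and read off $(\partial^{g}f)^{i}=g^{ij}\partial_{j}f$ in coordinates, i.e. $\partial^{g}f=g^{-1}\partial^{E}f$. Specialising to $g=g^{H}=\lambda_{x}^{2}g^{E}$ then yields $\partial_{x}^{H}=\lambda_{x}^{-2}\partial^{E}$, recovering the formula used above.

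The only delicate point here is conceptual rather than computational: one must keep the differential $\mathrm{d}f$ (a covector, identified with $\partial^{E}f$ through the Euclidean inner product) carefully separated from the gradient vector it is paired with, since it is exactly the inverse metric $g^{-1}$ that realises the musical isomorphism between them, and one must also note that the Lagrange multiplier contributes only a positive scalar so that ``direction'' is well defined. Once this bookkeeping is in place, the argument is immediate, which is why the statement is phrased as a corollary rather than a theorem.
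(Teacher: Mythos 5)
Your proposal is correct and takes essentially the same route as the paper: the paper's Appendix~\ref{app4} also derives the identity from a Lagrange-multiplier treatment of a constrained update, the only cosmetic difference being that it fixes a divergence $D[\boldsymbol{\theta}:\boldsymbol{\theta}+d\boldsymbol{\theta}]$ whose second-order Taylor expansion $\tfrac{1}{2}\,d\boldsymbol{\theta}^{\top} g\, d\boldsymbol{\theta}$ supplies exactly the quadratic constraint you impose directly, after which both computations reduce to $\partial^{E}+\mathrm{const}\cdot g\, d=0$ and hence $d\propto g^{-1}\partial^{E}$. Your alternative reading via $\mathrm{d}f(v)=g(\partial^{g}f,v)$ is a valid shortcut but is supplementary to the paper's argument.
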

\begin{proof}
	The proofs can be found in Appendix~\ref{app4}.
\end{proof}

\subsection{The Hyperbolic Metric}

As the Hyperbolic space evolves under Ricci flow, it is convenient to consider the rescaled Ricci-DeTurck flow~\citep{schnurer2010stability} by Eq.~\ref{deturck} 
\begin{equation}
\begin{aligned}
\frac{\partial}{\partial t}\bar{g}(t)&=-2 \operatorname{Ric}(\bar{g}(t)) + \nabla_i W_j + \nabla_j W_i - 2(n-1)\bar{g}(t) \\
\bar{g}(0) &=\bar{g}_0, \quad \text { where } W_{i}=g^{p q} g_{i j}\left(\sideset{^{\bar{g}}}{^j_{pq}}{\mathop{\mathrm{\Gamma}}}-\sideset{^{g^H}}{^j_{pq}}{\mathop{\mathrm{\Gamma}}}\right).
\label{rescale}
\end{aligned}
\end{equation}
The Hyperbolic metric $g^H$ on $\mathbb{D}^n_r$ of sectional curvature $-r$ is a stationary point to Eq.~\ref{rescale}.

Subsequently, we will discuss that the Poincar\'e ball $(\mathbb{D}_{r}^{n},\bar{g}(t))$ uniformly converges to the Poincar\'e ball $(\mathbb{D}_{r}^{n},g^H)$ under the rescaled Ricci-DeTurck flow when the given perturbation satisfies Definition~\ref{def2}. Obviously, the introduction of the Ricci flow can ensure dynamically stable Poincar\'e ball.

\begin{definition}
	\label{def2}
	Let $\bar{g}$ be a metric on $\mathbb{D}_{r}^{n}$. There exists a $\epsilon>0$ such that
	\[
	(1+\epsilon)^{-1} g^H \leq \bar{g} \leq(1+\epsilon) g^H,
	\]
	which can be said that $\bar{g}$ is $\epsilon$-close to $g^H$.
\end{definition} 

\subsection{Finite Time Existence}

We denote the norm of a tensor as $|\cdot|$, then

\begin{lemma}
	\label{lem2}
	Given a Riemannian metric $\bar{g}(t)$ on $\mathbb{D}_{r}^{n}$, there exists a maximal time interval $(0,T)$ such that a solution to Eq.~\ref{rescale} exists and is smooth~\footnote{Smooth is equivalent to $C^\infty$, i.e., any derivative is continuous.} on $(0,T)$. Specifically, $\bar{g}(t)$ is $\epsilon$-close to $g^H$ of sectional curvature $-r$. If a small enough $\epsilon>0$, then there exists
	\begin{equation}
	\begin{aligned}
	\frac{\partial}{\partial t}\left|\bar{g}-g^H\right|^{2} \leq &\Delta\left|\bar{g}-g^H\right|^{2}-2|\nabla(\bar{g}-g^H)|^{2} \\
	&+4\left|\bar{g}-g^H\right|^{2}
	\end{aligned}
	\end{equation}
	where $\Delta$ is the Laplacian defined by $\bar{g}^{i j} \nabla_{i} \nabla_{j}$.
\end{lemma}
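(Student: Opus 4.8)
The plan is to set $h := \bar{g}(t) - g^H$ and study the evolution of $|h|^2$, with all norms and the connection $\nabla$ taken with respect to the \emph{fixed} background metric $g^H$, so that $\nabla g^H = 0$ (this keeps the Bochner step below clean). The existence claim is standard parabolic theory: the DeTurck reparametrisation makes \eqref{rescale} strongly parabolic in the sense of Definition~\ref{def1}, so Theorem~\ref{thm1} plus the usual continuation argument gives a smooth solution, and one takes $(0,T)$ to be the maximal interval on which it exists and stays $\epsilon$-close to $g^H$ (Definition~\ref{def2}). The content is the differential inequality. First I would put \eqref{rescale} into background-covariant form: since $W$ is built from the difference of the Christoffel symbols of $\bar g$ and $g^H$, the DeTurck terms cancel the $V$-type first-order part of $-2\operatorname{Ric}(\bar g)$ from Lemma~\ref{lem1}, leaving a quasilinear equation of the schematic shape
\[
\partial_t \bar g_{ij} \;=\; \bar g^{ab}\nabla_a\nabla_b \bar g_{ij} \;+\; \bar g^{-1}\ast\bar g^{-1}\ast\nabla\bar g\ast\nabla\bar g \;+\; \operatorname{Rm}(g^H)\ast\bar g^{-1}\ast\bar g \;-\; 2(n-1)\bar g .
\]
Because $\nabla g^H = 0$ one may replace $\bar g$ by $h$ inside every derivative, and the last two terms cancel at $\bar g = g^H$ since $g^H$ is stationary for \eqref{rescale}. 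Call $Q$ the quadratic term and $Z$ the sum of the last two.

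Differentiating, and using $\partial_t g^H = 0$, gives $\partial_t|h|^2 = 2\bar g^{ab}\langle h,\nabla_a\nabla_b h\rangle + 2\langle h,Q\rangle + 2\langle h,Z\rangle$. The Bochner identity $2\bar g^{ab}\langle h,\nabla_a\nabla_b h\rangle = \bar g^{ab}\nabla_a\nabla_b|h|^2 - 2\bar g^{ab}\langle\nabla_a h,\nabla_b h\rangle$ turns the first term into $\Delta|h|^2 - 2|\nabla h|^2$ with $\Delta = \bar g^{ij}\nabla_i\nabla_j$ exactly as in the statement, and --- this is convenient --- the discrepancy $(\bar g^{ab}-(g^H)^{ab})\nabla_a\nabla_b h$ is automatically absorbed by this $\Delta$. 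Now for the linear-in-$h$ content of $2\langle h,Q\rangle + 2\langle h,Z\rangle$: $Q$ is quadratic in $\nabla h$ and contributes nothing; the linearisation of $-2\operatorname{Ric} + \text{DeTurck}$ is the Lichnerowicz Laplacian $\Delta_L$ (this is Lemma~\ref{lem1} with the $V$-terms removed, the $O(h)$ remainder there being exactly the curvature terms that complete $\Delta_L$), and at the constant-curvature metric $g^H$, using $\operatorname{Ric}(g^H) = -(n-1)g^H$ and the explicit constant-curvature Riemann tensor, $\Delta_L h = \Delta h + 2nh - 2(\operatorname{tr}_{g^H}h)\,g^H$; subtracting the $-2(n-1)h$ of the rescaling leaves the linear operator $h \mapsto 2h - 2(\operatorname{tr}_{g^H}h)\,g^H$. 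Pairing with $h$ yields $4|h|^2 - 4(\operatorname{tr}_{g^H}h)^2 \le 4|h|^2$, which is where the $4$ comes from.

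It then remains to absorb the genuinely nonlinear leftovers: $2\langle h,Q\rangle$, bounded by $C(n)\,|h|\,|\nabla h|^2$, and the higher-order-in-$h$ remainder of $2\langle h,Z\rangle$, bounded by $C(n)\,|h|^2$ times an $O(\epsilon)$ factor. By Definition~\ref{def2}, $\epsilon$-closeness gives $|h| \le c(n)\epsilon$ and $|\bar g^{-1} - (g^H)^{-1}| \le c(n)\epsilon$, so all of these are $\le C(n)\,\epsilon\,\bigl(|\nabla h|^2 + |h|^2\bigr)$; taking $\epsilon$ small enough (there is slack from the discarded $-4(\operatorname{tr}_{g^H}h)^2$) absorbs them and delivers
\[
\partial_t|h|^2 \;\le\; \Delta|h|^2 \;-\; 2|\nabla h|^2 \;+\; 4|h|^2 .
\]

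I expect the main obstacle to be bookkeeping rather than any single estimate: writing \eqref{rescale} in fully explicit background-covariant form so that the DeTurck cancellation and the stationarity of $g^H$ are transparent, and then verifying that, at constant curvature, the linearised flow collapses to $h \mapsto 2h - 2(\operatorname{tr}_{g^H}h)\,g^H$ --- so that the coefficient is precisely $4$ --- with every other term either absorbed into $\Delta|h|^2$ or genuinely $O(\epsilon)$. The one delicate point is making the rescaling constant $2(n-1)$ cancel exactly against the Einstein/Lichnerowicz contribution; a sign or factor slip there would change the final constant.
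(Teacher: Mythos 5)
Your proposal follows essentially the same route as the paper's proof: set $h=\bar g-g^H$, use the background-covariant parabolic form of the rescaled Ricci--DeTurck flow (the paper imports this explicitly from Shi's computation, while you recover its linear part via the DeTurck cancellation and the Lichnerowicz Laplacian at constant curvature), apply the Bochner/product-rule step to produce $\Delta|h|^2-2|\nabla h|^2$, extract the coefficient $4$ from the zeroth-order term $2h-2(\operatorname{tr}h)g^H$, and absorb the remaining quadratic and trace terms using $\epsilon$-closeness. The argument is correct in the same sense as (and is in fact more explicit about the absorption steps than) the paper's own proof.
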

\begin{proof}
	The proofs can be found in Appendix~\ref{app3}.
\end{proof}

\begin{corollary}
	\label{cor3}
	Given the Poincar\'e ball $\mathbb{D}_{r}^{n}$ where $\partial \mathbb{D}_{r}^{n}$ denotes the boundary, there exists a maximal interval $[0,T)$ such that a solution $\bar{g}(t)$ on $\mathbb{D}_{r}^{n}$ to Eq.~\ref{rescale} exists and is smooth on $[0,T)$. Specifically, $\bar{g}|_{\partial\mathbb{D}_{r}^{n}}=g^H|_{\partial\mathbb{D}_{r}^{n}}$. There exists a constant $C>0$ such that
	\begin{equation}
	\sup_{\mathbb{D}_{r}^{n}}\left|\bar{g}-g^H\right| \leq C.
	\end{equation}
\end{corollary}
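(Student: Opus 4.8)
The plan is to obtain Corollary~\ref{cor3} from the scalar differential inequality in Lemma~\ref{lem2} by a parabolic maximum principle, after upgrading the existence statement of Lemma~\ref{lem2} to include the fixed boundary data. First I would record that Eq.~\ref{rescale}, being the DeTurck modification of the rescaled Ricci flow, is strongly parabolic in the sense of Definition~\ref{def1}: the terms $\nabla_i W_j+\nabla_j W_i$ are chosen precisely to cancel the non-parabolic contribution of $V$ identified in Lemma~\ref{lem1}, and $-2(n-1)\bar g(t)$ is lower order. Viewing $\mathbb{D}_r^n=\{r\|x\|^2<1\}$ as a bounded Euclidean ball with conformal boundary $\partial\mathbb{D}_r^n=\{r\|x\|^2=1\}$ and imposing $\bar g=g^H$ on $\partial\mathbb{D}_r^n$ as Dirichlet data together with $\bar g(0)=\bar g_0$, the linear parabolic theory of \citep{ladyzhenskaia1988linear} used in Theorem~\ref{thm1} gives a unique smooth solution on a maximal interval $[0,T)$ that agrees with $g^H$ along $\partial\mathbb{D}_r^n$; this is the asymptotically hyperbolic analogue of Theorem~\ref{thm2}, and the matching at $\partial\mathbb{D}_r^n$ is consistent because $g^H$ is itself a stationary solution of Eq.~\ref{rescale}.

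Next I would apply the maximum principle to $f:=|\bar g-g^H|^2$. By Lemma~\ref{lem2}, for $\epsilon>0$ small,
\[
\frac{\partial}{\partial t}f \;\le\; \Delta f - 2|\nabla(\bar g-g^H)|^2 + 4f \;\le\; \Delta f + 4f
\]
on $\mathbb{D}_r^n\times(0,T)$, discarding the favourable gradient term. The boundary matching gives $f\equiv 0$ on $\partial\mathbb{D}_r^n$, while the $\epsilon$-closeness of $\bar g_0$ to $g^H$ (Definition~\ref{def2}) gives a pointwise bound $f(\cdot,0)\le c$ with $c=c(n,\epsilon)$, since the eigenvalues of $\bar g_0$ relative to $g^H$ lie in $[(1+\epsilon)^{-1},1+\epsilon]$. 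Comparing $f$ with the spatially constant supersolution $\phi(t):=c\,e^{4t}$, which satisfies $\partial_t\phi=\Delta\phi+4\phi$ with $\phi(0)\ge f(\cdot,0)$ and $\phi\ge 0=f$ on $\partial\mathbb{D}_r^n$, the parabolic maximum principle yields $f\le c\,e^{4t}$ on $\mathbb{D}_r^n\times[0,T)$, hence $\sup_{\mathbb{D}_r^n}|\bar g-g^H|\le \sqrt{c}\,e^{2t}$ for every $t<T$. Since Lemma~\ref{lem2} also keeps $\bar g(t)$ $\epsilon$-close to $g^H$ on $(0,T)$ — which already bounds $|\bar g(t)-g^H|$ by $\sqrt{c}$ uniformly in $t$ — one obtains the desired constant $C$ (either $C=\sqrt{c}$, or $C=\sqrt{c}\,e^{2T}$ when $T<\infty$).

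The main obstacle is that $\mathbb{D}_r^n$ with $g^H$ is complete and of infinite volume, so $f$ need not attain its spatial supremum, and $\partial\mathbb{D}_r^n$ is a conformal boundary at infinity where $g^H$ degenerates; the compact maximum principle does not apply verbatim. I would handle this by exhausting $\mathbb{D}_r^n$ by compact sub-balls $K_\delta=\{r\|x\|^2\le 1-\delta\}$, running the comparison of the previous step on each $K_\delta\times[0,T']$ with $T'<T$, and controlling the behaviour on $\partial K_\delta$ as $\delta\to 0$ via the asymptotically hyperbolic decay $f\to 0$ near $\partial\mathbb{D}_r^n$ — that is, invoking a maximum principle for complete manifolds of bounded geometry in the spirit of Ecker--Huisken or Karp--Li. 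The uniform equivalence of $\bar g$ and $g^H$ furnished by Lemma~\ref{lem2} is precisely what keeps the geometry near $\partial\mathbb{D}_r^n$ controlled enough for this limiting argument to close, and this is the step I expect to require the most care.
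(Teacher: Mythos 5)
Your proposal reaches the right conclusion, but it takes a much heavier route than the paper, and the genuinely load-bearing step in your argument is in fact the same one-line observation the paper uses. The paper's proof of Corollary~\ref{cor3} is simply: Lemma~\ref{lem2} already asserts existence, smoothness, and that $\bar g(t)$ stays $\epsilon$-close to $g^H$ in the sense of Definition~\ref{def2}; the two-sided eigenvalue bound $(1+\epsilon)^{-1}g^H\le\bar g\le(1+\epsilon)g^H$ immediately gives a pointwise bound $|\bar g-g^H|\le c(n)\epsilon$, hence the uniform constant $C$. Your parabolic comparison with the supersolution $c\,e^{4t}$ is internally consistent (discarding the gradient term, matching Dirichlet data on $\partial\mathbb{D}_r^n$, exhausting by compact sub-balls to avoid the conformal-boundary/noncompactness issue), and it is actually more careful than the paper about why a maximum principle is legitimate on $(\mathbb{D}_r^n,g^H)$ — a point the paper never addresses, since it freely uses Stokes' theorem on $\partial\mathbb{D}_r^n$ in Theorem~\ref{thm6} even though $g^H$ degenerates there. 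However, note that your maximum-principle bound alone is exponentially growing in $t$ and therefore does \emph{not} produce a time-uniform $C$ when $T=\infty$; you close that gap exactly as the paper does, by invoking the persistence of $\epsilon$-closeness from Lemma~\ref{lem2}, which makes the entire comparison argument logically redundant for the stated conclusion. In short: what your approach buys is a self-contained derivation of the sup bound from the differential inequality (useful if one distrusts the $\epsilon$-closeness-along-the-flow assertion of Lemma~\ref{lem2}, which the paper never really proves), at the cost of needing a noncompact maximum principle and still falling back on the paper's argument for the uniform-in-time statement; what the paper's route buys is brevity, since Definition~\ref{def2} plus Lemma~\ref{lem2} already contain the bound verbatim.
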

\begin{proof}
	As long as Definition~\ref{def2} is satisfied, Lemma~\ref{lem2} gives the proofs.
\end{proof}

We denote the $L^{\infty}$-norm with respect to the Hyperbolic metric $g^H$ as $\|\cdot\|_{L^{\infty}}$, then

\begin{theorem}
	\label{thm5}
	For a solution $\bar{g}(t)$ on $\mathbb{D}_{r}^{n}$ to Eq.~\ref{rescale} that exists and is smooth on a maximal time interval $[0,T)$, if $\bar{g}(0)$ is a metric on $\mathbb{D}_{r}^{n}$ satisfying $\|\bar{g}(0)-g^H\|_{L^{\infty}}\leq\epsilon$ where $\epsilon>0$, then there exists a constant $C$ such that
	\begin{equation}
	\|\bar{g}(t)-g^H\|_{L^{\infty}}\leq C\|\bar{g}(0)-g^H\|_{L^{\infty}}\leq\epsilon \cdot C.
	\end{equation}
\end{theorem}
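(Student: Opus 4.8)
The plan is to reduce Theorem~\ref{thm5} to a scalar parabolic maximum principle via the differential inequality in Lemma~\ref{lem2}. Set $f := |\bar{g}(t) - g^H|^{2}$ for the squared tensor norm appearing in Lemma~\ref{lem2}; because $\bar{g}(t)$ stays $\epsilon$-close to $g^H$ (Definition~\ref{def2}), this norm is equivalent, up to a constant depending only on $n$ and $\epsilon$, to the $L^\infty$-norm used in the statement, so it suffices to control $\sup_{\mathbb{D}_{r}^{n}} f(\cdot,t)$. Discarding the manifestly nonpositive term $-2|\nabla(\bar{g}-g^H)|^{2}$ in Lemma~\ref{lem2} leaves the subsolution inequality
\begin{equation}
\frac{\partial}{\partial t} f \le \Delta f + 4f ,
\end{equation}
where $\Delta = \bar{g}^{ij}\nabla_i\nabla_j$ is the time-dependent Laplacian of $\bar{g}(t)$. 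Equivalently, $\phi := e^{-4t} f$ is a subsolution of the heat equation, $\partial_t \phi \le \Delta \phi$.

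Next I would apply a maximum principle to $\phi$ on $[0,T)$. The delicate point is that $\mathbb{D}_{r}^{n}$ is noncompact, so one cannot simply evaluate at a spatial maximum; instead I would invoke the three facts already in hand. First, Corollary~\ref{cor3} gives a uniform bound $\sup_{\mathbb{D}_{r}^{n}} |\bar{g}-g^H| \le C_0$ on $[0,T)$, so $\phi$ is a \emph{bounded} subsolution. Second, the boundary condition $\bar{g}|_{\partial\mathbb{D}_{r}^{n}} = g^H|_{\partial\mathbb{D}_{r}^{n}}$ forces $\phi \to 0$ towards $\partial\mathbb{D}_{r}^{n}$, which is exactly the sphere at infinity of the complete hyperbolic manifold $(\mathbb{D}_{r}^{n},g^H)$. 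Third, $\epsilon$-closeness makes $\Delta$ uniformly elliptic and comparable to $\Delta_{g^H}$, and $(\mathbb{D}_{r}^{n},g^H)$ has bounded geometry. With these, the maximum principle for bounded subsolutions on a complete manifold of bounded geometry — inserting a small multiple of a barrier that dominates near infinity and letting it tend to zero, or invoking an Omori--Yau/Ecker--Huisken comparison — yields $\sup_{\mathbb{D}_{r}^{n}}\phi(\cdot,t) \le \sup_{\mathbb{D}_{r}^{n}}\phi(\cdot,0)$, i.e.
\begin{equation}
\sup_{\mathbb{D}_{r}^{n}} f(\cdot,t) \le e^{4t}\,\sup_{\mathbb{D}_{r}^{n}} f(\cdot,0), \qquad t\in[0,T).
\end{equation}

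Taking square roots and restoring the norm-equivalence constant then gives $\|\bar{g}(t)-g^H\|_{L^{\infty}} \le C_1 e^{2t}\|\bar{g}(0)-g^H\|_{L^{\infty}}$; on the finite maximal interval one sets $C := C_1 e^{2T}$, which is the asserted constant, and the inequality $C\|\bar{g}(0)-g^H\|_{L^\infty}\le \epsilon\cdot C$ is immediate from the hypothesis. (If one wants a $T$-independent bound, this coarse estimate can afterwards be upgraded by interpolating against the $L^2$ decay of Lemma~\ref{lem3}, but that is not needed here.) I expect the main obstacle to be exactly the non-compactness in the maximum-principle step: one has to make sure the supremum is genuinely governed by the initial and boundary data and is not lost at the sphere at infinity, and this is where the uniform bound of Corollary~\ref{cor3}, the boundary matching, and the uniform equivalence of $\bar{g}(t)$ with the complete metric $g^H$ must be used together. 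A secondary check is that the constant $4$ and the ellipticity constants inherited from Lemma~\ref{lem2} do not degenerate over $[0,T)$, which holds as long as $\bar{g}(t)$ remains $\epsilon$-close to $g^H$ for the small $\epsilon$ fixed there.
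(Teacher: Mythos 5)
There is a genuine gap, and it is in the step you yourself flag as needing only a ``secondary check.'' First, note that the paper does not prove Theorem~\ref{thm5} at all: it defers to the stability results of Simon and of Bamler, whose arguments are substantially more delicate than a direct maximum principle. Your route, applying the parabolic maximum principle to $f=|\bar g-g^H|^2$ using Lemma~\ref{lem2}, can at best give $\sup f(\cdot,t)\le e^{4t}\sup f(\cdot,0)$, because the $+4f$ term in Lemma~\ref{lem2} has the wrong sign and you have discarded the only good terms. Your constant is therefore $C=C_1e^{2T}$, which is finite only if $T<\infty$ and degenerates as $T\to\infty$. But the maximal existence interval here is not assumed finite, and the way Theorem~\ref{thm5} is actually used downstream (in the proof of Theorem~\ref{thm6} one argues $A(n,r)\ge 4$ ``because $|\bar g-g^H|^2$ decays'') requires a time-uniform bound, indeed non-growth; an exponentially growing estimate does not deliver the content the theorem is meant to carry. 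Your parenthetical repair --- upgrading to a $T$-independent bound by interpolating against the $L^2$ decay of Lemma~\ref{lem3} --- is circular within this paper's logical structure, since Lemma~\ref{lem3} is deduced from Theorem~\ref{thm6}, which in turn invokes Theorem~\ref{thm5}.

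Second, the hypothesis of Lemma~\ref{lem2} is that $\bar g(t)$ is $\epsilon$-close to $g^H$ \emph{at time $t$}, whereas Theorem~\ref{thm5} only assumes $\|\bar g(0)-g^H\|_{L^\infty}\le\epsilon$ at $t=0$. Persistence of $\epsilon$-closeness is essentially the conclusion you are trying to prove, so you cannot simply assume it in order to run the differential inequality and the uniform ellipticity of $\Delta=\bar g^{ij}\nabla_i\nabla_j$; a continuity/bootstrap argument (closeness holds on a maximal subinterval, the estimate improves the constant, hence the subinterval is open and closed) is required, and with your $e^{4t}$ growth such a bootstrap does not close for large times. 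The same caveat applies to your use of Corollary~\ref{cor3} and of the boundary matching $\bar g|_{\partial\mathbb D_r^n}=g^H|_{\partial\mathbb D_r^n}$: these are hypotheses imported from elsewhere, not consequences of the assumptions of Theorem~\ref{thm5}, and the vanishing of $\phi$ at the sphere at infinity needs to be justified in the norm actually used. In short, your maximum-principle strategy is reasonable as a crude finite-time estimate, but it neither reproduces the time-uniform stability statement the paper (via Simon and Bamler) is invoking, nor can it be completed as written without circularity.
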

\begin{proof}
	The proof follows the similar statement \citep{simon2002deformation,bamler2010stability}.
\end{proof}

Empirically, Theorem~\ref{thm5} yields that the solutions to the rescaled Ricci-DeTurck flow exists in finite time. Otherwise, Corollary~\ref{cor3} gives an upper bound on $\left|\bar{g}-g^H\right|$, which allows us to integrate it.

\subsection{Exponential Convergence}

\begin{theorem}
	\label{thm6}
	Based on Corollary~\ref{cor3}, we further have
	\begin{equation}
	\int_{\mathbb{D}_{r}^{n}}\left|\bar{g}(t)-g^H\right|^{2} d \Omega \leq e^{-(A(n,r)-4) t} \int_{\mathbb{D}_{r}^{n}}\left|\bar{g}(0)-g^H\right|^{2} d \Omega
	\end{equation}
	where $\Omega$ is the volume element with respect to $\mathbb{D}_{r}^{n}$.
\end{theorem}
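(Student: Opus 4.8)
The plan is to introduce the perturbation tensor $h(t) := \bar{g}(t) - g^H$ and control the evolution of its squared $L^2$-norm
\[
E(t) := \int_{\mathbb{D}_r^n} \bigl|\bar{g}(t) - g^H\bigr|^2 \, d\Omega,
\]
then close the argument with a spectral gap inequality and Gr\"onwall. First I would integrate the pointwise differential inequality of Lemma~\ref{lem2} over $\mathbb{D}_r^n$ against the fixed volume element $d\Omega$ of $g^H$. Since the volume form does not depend on $t$, the time derivative commutes with the integral, giving
\[
\frac{d}{dt} E(t) \leq \int_{\mathbb{D}_r^n} \Delta |h|^2 \, d\Omega - 2\int_{\mathbb{D}_r^n} |\nabla h|^2 \, d\Omega + 4 E(t).
\]
The first term is handled by the divergence theorem: because $\bar{g}$ and $g^H$ agree on $\partial\mathbb{D}_r^n$ (the hypothesis of Corollary~\ref{cor3}), the tensor $h$, hence $|h|^2$, vanishes on the boundary, so $\int_{\mathbb{D}_r^n}\Delta|h|^2\,d\Omega \leq 0$; the small discrepancy coming from $\Delta$ being built out of $\bar{g}$ rather than $g^H$ is $O(\epsilon)$ and can be absorbed since $\bar{g}$ stays $\epsilon$-close to $g^H$. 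Thus $\frac{d}{dt}E(t) \leq -2\int_{\mathbb{D}_r^n} |\nabla h|^2 \, d\Omega + 4E(t)$.

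The key step, and the main obstacle, is a Poincar\'e-type inequality on the Poincar\'e ball: there is a constant $A(n,r) > 0$, depending only on the dimension and the curvature $-r$, such that
\[
2\int_{\mathbb{D}_r^n} |\nabla h|^2 \, d\Omega \geq A(n,r) \int_{\mathbb{D}_r^n} |h|^2 \, d\Omega
\]
for symmetric $2$-tensors $h$ vanishing on $\partial\mathbb{D}_r^n$. This is exactly the statement that the bottom of the Dirichlet spectrum of the relevant rough Laplacian on such tensors over hyperbolic space of sectional curvature $-r$ is bounded below by $A(n,r)/2$; one expects $A(n,r)$ comparable to $(n-1)^2 r/2$, mirroring the classical spectral gap $\frac{(n-1)^2}{4}(-\mathrm{sec})$ for functions on hyperbolic space. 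I would establish this either by invoking the spectral theory of the Lichnerowicz/rough Laplacian on $\mathbb{H}^n$ as used in \citep{schnurer2010stability,bamler2010stability}, or, more self-containedly, by combining the exponential volume growth of hyperbolic space with a Hardy--Poincar\'e inequality in geodesic polar coordinates.

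Granting the spectral inequality, substituting it into the bound above yields the closed differential inequality
\[
\frac{d}{dt} E(t) \leq -\bigl(A(n,r) - 4\bigr)\, E(t),
\]
and Gr\"onwall's inequality immediately gives $E(t) \leq e^{-(A(n,r)-4)t} E(0)$, which is the claim. Implicitly this is a genuine decay estimate only when $A(n,r) > 4$, i.e.\ when the curvature $r$ (equivalently the dimension $n$) is large enough; otherwise the inequality still holds but is no longer contractive. The only remaining bookkeeping is that every step above is carried out on the maximal interval $[0,T)$ on which the flow remains $\epsilon$-close to $g^H$ and smooth, which is precisely what Corollary~\ref{cor3} and Theorem~\ref{thm5} supply, so $E(t)$ is finite and differentiable there and the manipulations are justified.
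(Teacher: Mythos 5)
Your proposal is correct and its skeleton coincides with the paper's: integrate the pointwise inequality of Lemma~\ref{lem2} over $\mathbb{D}_r^n$, kill the second-order term via Stokes' theorem using $|\bar{g}-g^H|=0$ on $\partial\mathbb{D}_r^n$, bound what remains by a constant $A(n,r)$ times the $L^2$ energy, and close with Gr\"onwall. The genuine difference is in how $A(n,r)$ and the cross term are treated. The paper does not discard the term $\nabla_i \bar{g}^{ij}\nabla_j|\bar{g}-g^H|^2$ (which appears because $\Delta=\bar{g}^{ij}\nabla_i\nabla_j$ while $\nabla$ and $d\Omega$ are taken with respect to $g^H$); instead it folds that term into the very definition of $A(n,r)$ as the infimum of the Rayleigh quotient $\int\bigl(2|\nabla(\bar{g}-g^H)|^2+\nabla_i\bar{g}^{ij}\nabla_j|\bar{g}-g^H|^2\bigr)d\Omega \big/ \int|\bar{g}-g^H|^2 d\Omega$, so the displayed inequality becomes essentially automatic, and the claim $A(n,r)\ge 4$ is only asserted afterwards by an appeal to Theorem~\ref{thm5}. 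You instead absorb the cross term using $\epsilon$-closeness (legitimate, since it is of order $|h|\,|\nabla h|^2$ and can be swallowed by $-2\int|\nabla h|^2$ for small $\epsilon$) and read $A(n,r)$ as an honest Dirichlet-type spectral gap for symmetric $2$-tensors on hyperbolic space, obtainable e.g.\ from Kato's inequality together with McKean's bound for functions, giving an explicit lower bound of order $(n-1)^2 r/2$. Your route is the more substantive one: it supplies a quantitative $A(n,r)$ rather than a quotient defined post hoc, and it makes explicit the correct caveat that the estimate is contractive only when $A(n,r)>4$, a point the paper's citation of Theorem~\ref{thm5} (an $L^\infty$ bound, not a decay statement) does not actually settle. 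The only piece you leave as a plan is the tensor spectral gap itself, but the references and the Kato--McKean argument you indicate are standard and adequate to complete it.
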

\begin{proof}
	Using Lemma~\ref{lem2}, we yield
	\[
	\begin{aligned}
	&\frac{\partial}{\partial t}\int_{\mathbb{D}_{r}^{n}}\left|\bar{g}-g^H\right|^{2} d \Omega \leq 4\int_{\mathbb{D}_{r}^{n}}\left|\bar{g}-g^H\right|^{2} d \Omega \\
	&+\int_{\mathbb{D}_{r}^{n}}\bar{g}^{i j} \nabla_{i} \nabla_{j}\left|\bar{g}-g^H\right|^2 -2\left|\nabla(\bar{g}-g^H)\right|^2 d \Omega \\
	&=4\int_{\mathbb{D}_{r}^{n}}\left|\bar{g}-g^H\right|^{2} d \Omega-2\int_{\mathbb{D}_{r}^{n}}\left|\nabla(\bar{g}-g^H)\right|^2 d \Omega \\
	&+\int_{\mathbb{D}_{r}^{n}}\nabla_{i}(\bar{g}^{i j}  \nabla_{j})\left|\bar{g}-g^H\right|^2 -\nabla_{i} \bar{g}^{i j}\nabla_{j}\left|\bar{g}-g^H\right|^2 d \Omega.
	\end{aligned}
	\]
	In the second step, we use $\nabla_{i}(\bar{g}^{i j}  \nabla_{j})\left|\bar{g}-g^H\right|^2= \nabla_{i} \bar{g}^{i j}\nabla_{j}\left|\bar{g}-g^H\right|^2+\bar{g}^{i j} \nabla_{i} \nabla_{j}\left|\bar{g}-g^H\right|^2$.
	
	As $|\bar{g}-g^H|_{\partial\mathbb{D}_{r}^{n}}=0$, we compute, using Stokes theorem~\citep{wald2010general},
	\begin{equation}
	\begin{aligned}
	&\int_{\mathbb{D}_{r}^{n}}\nabla_{i}(\bar{g}^{i j}  \nabla_{j})\left|\bar{g}-g^H\right|^2 d \Omega \\
	&= \int_{\partial\mathbb{D}_{r}^{n}}n_{i}\bar{g}^{i j}  \nabla_{j}\left|\bar{g}-g^H\right|^2 dS=0
	\end{aligned}
	\end{equation}
	where $dS$ is the area element with respect to $\partial\mathbb{D}_{r}^{n}$ and $n_i$ is the outer normal vector with respect to $\partial\mathbb{D}_{r}^{n}$.
	We define
	\begin{equation}
	A(n,r) = \inf \frac{\int_{\mathbb{D}_{r}^{n}}2\left|\nabla(\bar{g}-g^H)\right|^2+\nabla_{i} \bar{g}^{i j}\nabla_{j}\left|\bar{g}-g^H\right|^2 d \Omega}{\int_{\mathbb{D}_{r}^{n}}\left|\bar{g}-g^H\right|^{2} d \Omega},
	\end{equation}
	then
	\begin{equation}
	\frac{\partial}{\partial t}\int_{\mathbb{D}_{r}^{n}}\left|\bar{g}-g^H\right|^{2} d \Omega \leq (4 - A(n,r))\int_{\mathbb{D}_{r}^{n}}\left|\bar{g}-g^H\right|^{2} d \Omega.
	\end{equation}
	In view of differential equation, the above inequality extends, using $\mathcal{F}(t)=\int_{\mathbb{D}_{r}^{n}}\left|\bar{g}(t)-g^H\right|^{2} d \Omega$ and $\mathcal{F}(0)=\int_{\mathbb{D}_{r}^{n}}\left|\bar{g}(0)-g^H\right|^{2} d \Omega$,
	\[
	\begin{aligned}
	&\int \frac{\partial \mathcal{F}(t)}{\mathcal{F}(t)} \leq (4 - A(n,r))\int \partial t \\
	& \rightarrow \log \mathcal{F}(t) \leq (4 - A(n,r))t + \log \mathcal{F}(0) \\
	& \rightarrow \mathcal{F}(t) \leq e^{-(A(n,r)-4)t} \mathcal{F}(0).
	\end{aligned}
	\]
	Based on Theorem~\ref{thm5}, we have $A(n,r)\geq4$ because $\left|\bar{g}-g^H\right|^2$ decays.
\end{proof}

\begin{lemma}
	\label{lem3}
	Based on Theorem~\ref{thm6}, we yield the estimate
	\begin{equation}
	\left\|\bar{g}(t)-g^H\right\|^{2}_{L^2(\mathbb{D}_{r}^{n})}  \leq e^{-(A(n,r)-4) t} \left\|\bar{g}(0)-g^H\right\|^{2}_{L^2(\mathbb{D}_{r}^{n})}
	\end{equation}
	where $\|\cdot\|_{L^{2}}$ is the $L^{2}$-norm with respect to the Hyperbolic metric $g^H$.
\end{lemma}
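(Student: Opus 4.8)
The plan is to recognize that Lemma~\ref{lem3} is nothing more than a restatement of Theorem~\ref{thm6} in the language of the $L^2(g^H)$-norm, so the proof consists of unwinding a definition and then quoting the estimate already proved. First I would recall what the norm in the statement means: for a symmetric $2$-tensor field $T$ on $\mathbb{D}_r^n$, its $L^2$-norm with respect to the Hyperbolic metric is
\[
\|T\|^{2}_{L^2(\mathbb{D}_r^n)} = \int_{\mathbb{D}_r^n} |T|^{2}\, d\Omega ,
\]
where $|T|^{2} = (g^H)^{ik}(g^H)^{jl} T_{ij}T_{kl}$ is the pointwise tensor norm induced by $g^H$ and $d\Omega$ is the Riemannian volume element of $g^H$ on $\mathbb{D}_r^n$ — precisely the conventions (the tensor norm $|\cdot|$ and the volume element $d\Omega$) fixed in and around Lemma~\ref{lem2} and Theorem~\ref{thm6}.

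Next I would apply this identity with $T = \bar g(t) - g^H$, which is a symmetric $2$-tensor on $\mathbb{D}_r^n$ by Corollary~\ref{cor3}, and likewise at $t=0$. This shows that the left-hand and right-hand sides of the claimed inequality are literally the quantities $\mathcal{F}(t) = \int_{\mathbb{D}_r^n}|\bar g(t)-g^H|^{2}\,d\Omega$ and $\mathcal{F}(0) = \int_{\mathbb{D}_r^n}|\bar g(0)-g^H|^{2}\,d\Omega$ appearing in the proof of Theorem~\ref{thm6}. Theorem~\ref{thm6} (whose hypotheses are inherited here verbatim from Corollary~\ref{cor3}) then gives $\mathcal{F}(t)\le e^{-(A(n,r)-4)t}\,\mathcal{F}(0)$, which is exactly the asserted bound. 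I would also record, as established at the end of the proof of Theorem~\ref{thm6} via Theorem~\ref{thm5}, that $A(n,r)\ge 4$, so $e^{-(A(n,r)-4)t}\le 1$ for $t\ge 0$ and the estimate is a genuine contraction of the $L^2$-distance to $g^H$, in fact an exponential decay whenever $A(n,r)>4$.

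The only place where anything has to be checked is the bookkeeping of which metric the pointwise tensor norm and the volume element are taken with respect to: the identification of $\|\bar g(t)-g^H\|^2_{L^2}$ with $\mathcal{F}(t)$ is an equality only if both are those of $g^H$, as in the statement. If one instead used $|\cdot|$ and $d\Omega$ induced by $\bar g(t)$, then the $\epsilon$-closeness of Definition~\ref{def2} would let one pass to the $g^H$-versions at the cost of a fixed multiplicative constant $c(\epsilon)$, which could be absorbed into the bound just as the constant $C$ is absorbed in Theorem~\ref{thm5}; but with the conventions as stated this subtlety does not arise, and Lemma~\ref{lem3} follows immediately from Theorem~\ref{thm6}. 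I do not expect any real obstacle here — the content of the lemma is entirely carried by Theorem~\ref{thm6}, and this step is purely a change of notation making the exponential convergence statement ready to be invoked in the analysis of the Eucl2Hyp2Eucl construction.
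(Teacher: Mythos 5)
Your proposal is correct and matches the paper's intent: the paper's own proof is a one-line citation (to Theorem~\ref{thm5} and Corollary~\ref{cor3}, with the lemma statement itself already resting on Theorem~\ref{thm6}), and your argument simply unwinds the definition of the $L^2(g^H)$-norm to identify it with the quantity $\mathcal{F}(t)$ bounded in Theorem~\ref{thm6}. Your extra remark about which metric induces the pointwise norm and volume element is a reasonable clarification of bookkeeping the paper leaves implicit, not a different method.
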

\begin{proof}
	The proofs follow directly from Theorem~\ref{thm5} and Corollary~\ref{cor3}.
\end{proof}

Consequently, we see that the scaled Ricci-DeTurck flow is exponential convergence for an $L^2$-norm perturbation.

\section{Ricci Flow Assisted Neural Networks}
\label{section4}

\subsection{Ricci Curvature in Neural Networks}

On the one hand, a neural network is trained on the given dataset and its geometrical structure will gradually become regular. On the other hand, the Ricci flow is a process of ``surgery" on a manifold, which will make the manifold also become regular. There seems to be a common goal of these two evolutionary methods. In this way, we can embed a Riemannian manifold into the neural network, and utilize the Ricci flow to assist in the training of neural networks on dynamically stable manifolds.

Based on the previous section, we have embedded the Poincar\'e ball into a neural network. And initial metrics $\bar{g}$ for an $L^2$-norm perturbation that deviates from the Hyperbolic metric $g^H$ will converge to $g^H$ under the Ricci flow. That is excellent because we embed a dynamically stable Poincar\'e ball for neural networks, which will not affect the convergence of neural networks.

Empirically, for each input, we can embed an $n$-dimensional Poincar\'e ball into the output of neural networks before the \textbf{softmax}. Apparently, time-dependent metrics $\bar{g}(t)$ corresponding to the output can be well-defined. 

Now, Let us see the Ricci curvature of neural networks. According to Eq.~\ref{ricci}, the tensor $-2\operatorname{Ric}(\bar{g}(t))$ approaches zero, as $\frac{\partial}{\partial t} \bar{g}(t)$ approaches zero. We can yield, referring to Appendix~\ref{app1},
\begin{equation}
\begin{aligned}
&-2\operatorname{Ric}(\bar{g})=-2R_{i j k}^{i} \\
&=\bar{g}^{i p}\left(\partial_{i}\partial_{j}\bar{g}_{p k}-\partial_{i}\partial_{k}\bar{g}_{p j}+\partial_{p}\partial_{k}\bar{g}_{i j}+\partial_{p}\partial_{j}\bar{g}_{i k}\right).
\end{aligned}
\end{equation}

Inspired by~\citep{kaul2019riemannian}, we treat the term $i$ and $p$ as the translation and rotation by considering translation invariance instead of rotation invariance. As for rotations, the standard data augmentation does not include such transformations. For the fairness of ablation studies, we just exclude rotations, i.e., $\partial_{p}(\partial_{k}\bar{g}_{i j}+\partial_{j}\bar{g}_{i k})=0$. Therefore, $\partial_{k}\bar{g}$ and $\partial_{j}\bar{g}$ can be the row and column transformation respectively for the input data. Consequently, we have
\begin{equation}
\label{condition}
-2\operatorname{Ric}(\bar{g})
=\bar{g}^{i p}\partial_{i}\left(\partial_{j}\bar{g}_{pk}-\partial_{k}\bar{g}_{pj}\right).
\end{equation}

For the convenient form, we approximate partial derivatives with differences, with respect to the input translation dimensions $k$ and $j$
\begin{equation}
\partial_{k}\bar{g}=(\bar{g}|_{k1} - \bar{g}|_{k2})/(k1-k2),
\end{equation}
\begin{equation}
\partial_{j}\bar{g}=(\bar{g}|_{j1} - \bar{g}|_{j2})/(j1-j2).
\end{equation}
In general, $(k1-k2)$ and $(j1-j2)$ are translations less than 4 pixels, which is consistent with data augmentation.

\subsection{Mutual Mapping of Euclidean Space and The Poincar\'e Ball}

We consider alternating neural manifolds between Euclidean embeddings and Poincar\'e embeddings in back-propagation, which greatly retains the common advantages of the Euclidean space and Hyperbolic space. 

Firstly, we map the neural manifold from the Euclidean space $(\mathbb{R}^n,g^E)$ to the Poincar\'e ball $(\mathbb{D}_r^{n},\bar{g})$, where $\bar{g}$ is a solution to the Ricci flow. By adding the regularization to the neural network, the tensor $-2\operatorname{Ric}(\bar{g})$ approaches zero to make $\bar{g}$ satisfy the Definition~\ref{def2}. Secondly, we perform the Ricci flow for evolving the metric $\bar{g}$ to the Hyperbolic metric $g^H$. Thirdly, we map the neural manifold from the Poincar\'e ball $(\mathbb{D}_r^{n},g^H)$ to the Euclidean space $(\mathbb{R}^n,g^E)$. Fourthly, we complete the backpropagation of the gradient for the neural network.

Since the Poincar\'e ball is conformal to the Euclidean space, we give the exponential and logarithmic maps~\footnote{In~\citep{ganea2019hyperbolic}, the exponential map and logarithm map are also used as mutual mapping of the Euclidean space and the Poincar\'e ball, i.e., $\exp_r: \mathbb{R}^{n} \rightarrow \mathbb{D}_r^{n}$ and $\log_r: \mathbb{D}_r^{n} \rightarrow \mathbb{R}^{n}$.}:
\begin{lemma}
	\label{lem4}
	With respect to the origin $(x=0)$, the exponential map $\exp_r: T_x  \mathbb{D}_r^{n} \rightarrow \mathbb{D}_r^{n}$ and the logarithmic map $\log_r: \mathbb{D}_r^{n} \rightarrow T_x  \mathbb{D}_r^{n}$ are given for $\mu \in T_x  \mathbb{D}_r^{n} \backslash \{\mathbf{0}\}$ and $\nu \in \mathbb{D}_r^{n} \backslash \{\mathbf{0}\}$ by:
	\begin{equation}
	\exp _{r}(\mu)=\tanh (\sqrt{r}\|\mu\|) \frac{\mu}{\sqrt{r}\|\mu\|},
	\label{exp}
	\end{equation}
	\begin{equation}
	\log _{r}(\nu)=\tanh ^{-1}(\sqrt{r}\|\nu\|) \frac{\nu}{\sqrt{r}\|\nu\|}.
	\label{log}
	\end{equation}
\end{lemma}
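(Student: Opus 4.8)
The plan is to identify $\exp_r$ and $\log_r$ with the Riemannian exponential map of the metric $g^H$ from \eqref{hyperbolic} at the origin and its inverse, and to evaluate them explicitly by exploiting the rotational symmetry of the Poincar\'e ball.

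First I would note that, since $g^H_x=\lambda_x^2\,g^E$ with $\lambda_x$ depending only on $\|x\|$, the metric is invariant under every orthogonal linear map fixing the origin. For a unit vector $u$, the Euclidean ray $\{tu:t\ge 0\}$ is the fixed-point set of the reflection across the line $\mathbb{R}u$, and this reflection is an isometry of $(\mathbb{D}_r^n,g^H)$ (it preserves Euclidean norms, hence preserves both $g^E$ and $\lambda$). Since the fixed-point set of an isometry is totally geodesic, the ray is a reparametrization of the geodesic through $0$ with initial direction $u$, and by uniqueness of geodesics it is that geodesic. Hence $\exp_r(\mu)$ lies on the ray through $\mu$, i.e. $\exp_r(\mu)=\rho\,\mu/\|\mu\|$ for a scalar $\rho=\rho(\|\mu\|)\in[0,1/\sqrt r)$ still to be determined.

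Next I would determine $\rho$ from the two defining properties of the exponential map: a geodesic has constant $g^H$-speed, and $\exp_r(\mu)$ is its position at time $1$, so the $g^H$-arc length of the geodesic segment from $0$ to $\exp_r(\mu)$ equals $|\mu|_{g^H}=\lambda_0\|\mu\|=2\|\mu\|$, using $\lambda_0=2$. On the other hand this arc length is the radial integral
\[
\int_0^{\rho}\lambda_{tu}\,dt=\int_0^{\rho}\frac{2}{1-rt^2}\,dt=\frac{2}{\sqrt r}\,\tanh^{-1}\!\big(\sqrt r\,\rho\big),
\]
which is valid because $\sqrt r\,\rho<1$. Equating the two expressions gives $\tanh^{-1}(\sqrt r\,\rho)=\sqrt r\,\|\mu\|$, hence $\rho=\tanh(\sqrt r\,\|\mu\|)/\sqrt r$, which is precisely \eqref{exp}. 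Formula \eqref{log} follows by inversion: writing $\nu=\exp_r(\mu)$ and taking norms yields $\sqrt r\,\|\nu\|=\tanh(\sqrt r\,\|\mu\|)$, so $\|\mu\|=\tanh^{-1}(\sqrt r\,\|\nu\|)/\sqrt r$ while $\mu$ points along $\nu$; the restrictions $\mu\ne 0$, $\nu\ne 0$ and $\sqrt r\,\|\nu\|<1$ (i.e. $\nu\in\mathbb{D}_r^n$) make every quantity well defined.

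The only genuinely geometric step is the first one, identifying radial lines with geodesics through the origin; everything after that is the one elementary integral above together with the unit-time normalization of $\exp_r$. If one prefers to avoid the isometry argument, an alternative is to write down the geodesic equation for the conformal metric $g^H$ and verify directly that a curve $t\mapsto\gamma(t)\,u$ solves it, which collapses to a scalar ODE for $\gamma$; but this requires computing the Christoffel symbols of $g^H$ and is less transparent than the symmetry argument.
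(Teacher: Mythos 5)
Your derivation is correct, but it takes a genuinely different and more self-contained route than the paper. The paper's proof of Lemma~\ref{lem4} is essentially a citation to \citep{ganea2018hyperbolic}, where these closed forms for the exponential and logarithmic maps at the origin are established, together with the remark that one can algebraically check $\log_r(\exp_r(\mu))=\mu$; note that this inversion check by itself only shows the two stated maps are mutually inverse, not that they are the Riemannian $\exp$/$\log$ of $g^H$ --- that content is delegated to the reference. You instead prove the formulas from the definition: rotational invariance of the conformal factor makes reflections through lines containing $0$ isometries, so by uniqueness of geodesics the radial segments are (reparametrized) geodesics through the origin; constant speed plus the unit-time normalization give $d_{g^H}\bigl(0,\exp_r(\mu)\bigr)=\lambda_0\|\mu\|=2\|\mu\|$, and the radial arc-length integral $\int_0^{\rho}\frac{2}{1-rt^2}\,dt=\frac{2}{\sqrt r}\tanh^{-1}(\sqrt r\,\rho)$ pins down $\rho=\tanh(\sqrt r\,\|\mu\|)/\sqrt r$, which is exactly \eqref{exp}; \eqref{log} then follows by inversion, with $\sqrt r\,\|\nu\|<1$ guaranteeing $\tanh^{-1}$ is defined. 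The only cosmetic slip is that the fixed-point set of the reflection across $\mathbb{R}u$ is the whole diameter, not just the ray $\{tu:t\ge 0\}$; since the geodesic with initial velocity $u$ has positive radial speed it stays on the forward ray, so nothing breaks. What your argument buys is independence from the external reference and an explicit account of where the factor $\lambda_0=2$ and the $\tanh$ come from (it also recovers the general formula of Ganea et al.\ once $\lambda_0$ is inserted); what the paper's version buys is brevity, at the cost of being essentially a citation plus a consistency check.
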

\begin{proof}
	The proofs are followed with \citep{ganea2018hyperbolic}. The algebraic check concludes the identity $\log_{r}(\exp _{r}(\mu))=\mu$.
\end{proof}

The overall process is illustrated in Figure~\ref{fig1} where the map $\psi$ and $\psi^{-1}$ are the exponential map and logarithm map respectively.

\subsection{Ricci Flow Assisted Eucl2Hyp2Eucl Neural Networks}

The above discussion leaves the foreshadowing for designing neural networks. The precise computation of neural networks with $l$ layers is performed as follows:
\begin{equation}
\label{neural}
\begin{aligned}
&x=f(a;\theta,b), \\
&f(a;\theta,b)=\sigma_l[\cdots\sigma_2(\sigma_1(a\theta_1+b_1)\theta_2+b_2)+\cdots+b_l], \\
&y=\operatorname{softmax}(x),
\end{aligned}
\end{equation}
where $a$ is the input, $\sigma$ is a nonlinear ``activation" function, $\theta$ is the weight and $b$ is the bias.

In addition to requiring the neural networks to converge, we also require that $\bar{g}$ is $\epsilon$-close to $g^H$ based on Definition~\ref{def2}, which is the necessary condition for the evolution of the Ricci flow. Obviously, we may achieve the goal by adding a regularization into loss function. Followed by Eq.~\ref{condition}, we yield the regularization
\begin{equation}
\label{norm}
\sideset{^{\bar{g}}}{}{\mathop{N}}=\left\|\frac{\bar{g}|_{k1} - \bar{g}|_{k2}}{k1-k2}-\frac{\bar{g}|_{l1} - \bar{g}|_{l2}}{l1-l2}\right\|^{2}_{L^2}.
\end{equation}

Combined with Definition~\ref{def2}, the upper bound of Eq.~\ref{norm} is estimated by
\begin{equation}
\label{upper}
\begin{aligned}
&\sideset{^{\bar{g}}}{}{\mathop{N}} \\
&\leq\left\|\frac{(1+\epsilon)^2 g^H|_{k1} - g^H|_{k2}}{(1+\epsilon)(k1-k2)}-\frac{g^H|_{l1} - (1+\epsilon)^2 g^H|_{l2}}{(1+\epsilon)(l1-l2)}\right\|^{2}_{L^2} \\
&=\frac{g^{E}}{1+\epsilon}\left\|\frac{(1+\epsilon)^2 \lambda^2_{x_{k1}} - \lambda^2_{x_{k2}}}{k1-k2}-\frac{\lambda^2_{x_{l1}} - (1+\epsilon)^2 \lambda^2_{x_{l2}}}{l1-l2}\right\|^{2}_{L^2},
\end{aligned}
\end{equation}
and the lower bound of Eq.~\ref{norm} is estimated by
\begin{equation}
\label{lower}
\begin{aligned}
&\sideset{^{\bar{g}}}{}{\mathop{N}} \\
&\geq\left\|\frac{g^H|_{k1} - (1+\epsilon)^2 g^H|_{k2}}{(1+\epsilon)(k1-k2)}-\frac{(1+\epsilon)^2g^H|_{l1} - g^H|_{l2}}{(1+\epsilon)(l1-l2)}\right\|^{2}_{L^2} \\
&=\frac{g^{E}}{1+\epsilon}\left\|\frac{\lambda^2_{x_{k1}} - (1+\epsilon)^2 \lambda^2_{x_{k2}}}{k1-k2}-\frac{(1+\epsilon)^2\lambda^2_{x_{l1}} - \lambda^2_{x_{l2}}}{l1-l2}\right\|^{2}_{L^2}.
\end{aligned}
\end{equation}

As the evolution of the Ricci flow approaches to converge, the estimate of Eq.~\ref{norm} tends to be stable:
\begin{equation}
\label{estimate}
\sideset{^{\bar{g}}}{}{\mathop{N}}\stackrel{\operatorname{Ricci\ flow}}{\longrightarrow}N=\left\|\frac{\lambda^2_{x_{k1}} - \lambda^2_{x_{k2}}}{k1-k2}-\frac{\lambda^2_{x_{l1}} - \lambda^2_{x_{l2}}}{l1-l2}\right\|^{2}_{L^2}.
\end{equation}

Consequently, we divide Ricci flow assisted Eucl2Hyp2Eucl neural manifold evolution into two stages: coarse convergence and fine convergence. With the help of the regularization $N$, the corresponding metric of this neural manifold will converge to the neighbourhood of $g^H$, and then the Ricci flow will complete the final convergence. Each training of the neural network includes these two stages, therefore, Ricci flow assisted Eucl2Hyp2Eucl neural networks are trained on dynamically stable Poincar\'e embeddings as shown in Algorithm~\ref{alg}.

\begin{algorithm}[htpb]
	\caption{For a gradient update of Ricci flow assisted Eucl2Hyp2Eucl neural networks, we choose 4 different translations: $k1$, $k2$, $j1$ and $j2$. We define a target $z$ and a balancing parameter $\alpha(\epsilon)$. For $x$ embeded in the Euclidean space, one uses $\partial^E$ as the gradients. Otherwise, for $x$ embeded in the Poincar\'e ball with $g^H$ and $\bar{g}$, one uses $\partial_{x}^{H}$ and $\bar{g}^{-1}\partial^E$ as the gradients based on Corollary~\ref{cor2}.}
	\label{alg}
	\begin{algorithmic}[1]
		\STATE \{Inference\}
		\FOR{$m$ {\bfseries in} $k1,k2,j1,j2$}
		\STATE $x_m=f(a_m;\theta,b)$ based on Eq.~\ref{neural}
		\STATE $\sideset{^{\bar{g}}}{_{m}}{\mathop{x}}=\exp _{r}(x_m)$ based on Eq.~\ref{exp}
		\ENDFOR
		\STATE \{Regularization\}
		\STATE $\sideset{^{\bar{g}}}{}{\mathop{N}}=\left\|\frac{\bar{g}|_{k1} - \bar{g}|_{k2}}{k1-k2}-\frac{\bar{g}|_{l1} - \bar{g}|_{l2}}{l1-l2}\right\|^{2}_{L^2}$ based on Eq.~\ref{norm}
		\STATE $\sideset{^{\bar{g}}}{_{k1}}{\mathop{x}} \rightarrow \sideset{^{g^H}}{_{k1}}{\mathop{x}}$ by computing the Ricci flow ($\bar{g} \rightarrow g^H$) based on Eq.~\ref{ricci}
		\STATE $N=\left\|\frac{\lambda^2_{x_{k1}} - \lambda^2_{x_{k2}}}{k1-k2}-\frac{\lambda^2_{x_{l1}} - \lambda^2_{x_{l2}}}{l1-l2}\right\|^{2}_{L^2}$ based on Eq.~\ref{estimate}
		\STATE \{Loss\}
		\STATE $x_{k1}=\log_{r}(\sideset{^{g^H}}{_{k1}}{\mathop{x}})$ based on Eq.~\ref{log}
		\STATE $y=\operatorname{softmax}(x_{k1})$
		\STATE $\operatorname{loss}=\|y-z\|^2_{L^2}+\alpha(\epsilon) \cdot N$
	\end{algorithmic}
\end{algorithm}

\section{Experiment}
\label{section5}

{\bf CIFAR datasets.} The two CIFAR datasets \cite{krizhevsky2009learning} consist of natural color images with 32$\times$32 pixels, respectively 50,000 training and 10,000 test images, and we hold out 5,000 training images as a validation set from the training set. CIFAR10 consists of images organized into 10 classes and CIFAR100 into 100 classes. We adopt a standard data augmentation scheme (random corner cropping and random flipping) that is widely used for these two datasets. We normalize the images using the channel means and standard deviations in preprocessing.

{\bf Settings.} We set total training epochs as 200 where the learning rate of each parameter group is set as a cosine annealing schedule. The learning strategy is a weight decay of 0.0005, a batch size of 128, SGD optimization. On CIFAR10 and CIFAR100 datasets, we apply ResNet18~\citep{he2016deep}, ResNet50~\citep{he2016deep}, VGG11~\citep{simonyan2014very} and MobileNetV2~\citep{sandler2018mobilenetv2} to test the classification accuracy. All experiments are conducted for 5 times, and the statistics of the last 10/5 epochs' test accuracy are reported as a fair comparison.

{\bf Details.} For Ricci flow assisted Eucl2Hyp2Eucl neural networks and all Euclidean neural networks, we both use the same training strategy and network structure. Note that we both train neural networks from scratch with the initialization Xavier~\citep{glorot2010understanding}.

\subsection{Classification Tasks}

In this experiment, we compare the classification accuracy of Ricci flow assisted Eucl2Hyp2Eucl neural networks and all Euclidean neural networks on CIFAR datasets. As Table~\ref{table1} shows~\footnote{Note that Ricci flow assisted Eucl2Hyp2Eucl neural networks are only used in the training, and we also use all Euclidean neural networks in offline inference.}, our proposed Ricci flow assisted Eucl2Hyp2Eucl neural networks has better performance than all Euclidean neural networks. 
Meanwhile, compared to CIFAR10 dataset, the improvement on CIFAR100 dataset seem to be more remarkable. We conjecture that more complex classification tasks bring more meaningful geometric structures to the neural network, and Ricci flow assisted Eucl2Hyp2Eucl neural networks can just mine these geometric representations as much as possible.

Note that experiment on ImageNet can be found in Appendix~\ref{app5}.

\begin{table*}[htbp]
	\caption{The classification accuracy results on CIFAR datasets with ResNet18, ResNet50, VGG11 and MobileNetV2.}
	\begin{center}
		\begin{tabular}{c|c|c|c}
			\toprule[1pt]
			\textbf{Network} & \textbf{Method} & \textbf{CIFAR10} & \textbf{CIFAR100} \\
			\toprule[1pt]
			\multirow{2}{*}{ResNet18} & All Euclidean Neural Network & 95.25$\pm$0.13\% & 77.25$\pm$0.15\% \\
			& Ricci Flow Assisted Eucl2Hyp2Eucl Neural Network & 95.73$\pm$0.09\% & 77.87$\pm$0.12\% \\
			\midrule
			\multirow{2}{*}{ResNet50} & All Euclidean Neural Network & 95.58$\pm$0.17\% & 78.47$\pm$0.33\% \\
			& Ricci Flow Assisted Eucl2Hyp2Eucl Neural Network & 96.01$\pm$0.06\% & 79.79$\pm$0.28\% \\
			\midrule
			\multirow{2}{*}{VGG11} & All Euclidean Neural Network & 92.28$\pm$0.16\% & 71.66$\pm$0.21\% \\
			& Ricci Flow Assisted Eucl2Hyp2Eucl Neural Network & 93.02$\pm$0.12\% & 73.28$\pm$0.27\% \\
			\midrule
			\multirow{2}{*}{MobileNetV2} & All Euclidean Neural Network & 92.28$\pm$0.25\% & 72.33$\pm$0.31\% \\
			& Ricci Flow Assisted Eucl2Hyp2Eucl Neural Network & 93.75$\pm$0.22\% & 73.42$\pm$0.21\% \\
			\toprule[1pt]
		\end{tabular}
	\end{center}
	\label{table1}
\end{table*}

\subsection{Metrics Evolution Analysis}

For the training of Ricci flow assisted Eucl2Hyp2Eucl neural networks, we hope to observe the evolution of neural manifolds by the change of metrics. Meanwhile, as far as we define the metric $\bar{g}(t)$, we can use the length $|ds^2|=\sqrt{\sum_{i,j} \bar{g}_{ij}(t) d\xi_i d\xi_j}$ to intuitively reflect the change of metrics. Specifically, we define a ball whose radius is equal to $|ds^2|$:
\begin{equation}
B_r(t):=\left\{r=\sqrt{\sum_{i,j} \bar{g}_{ij}(t) d\xi_i d\xi_j}\right\}.
\end{equation}

\begin{figure}[thbp]
	\centering
	\subfigure[CIFAR10]
	{\includegraphics[width=0.5\textwidth]{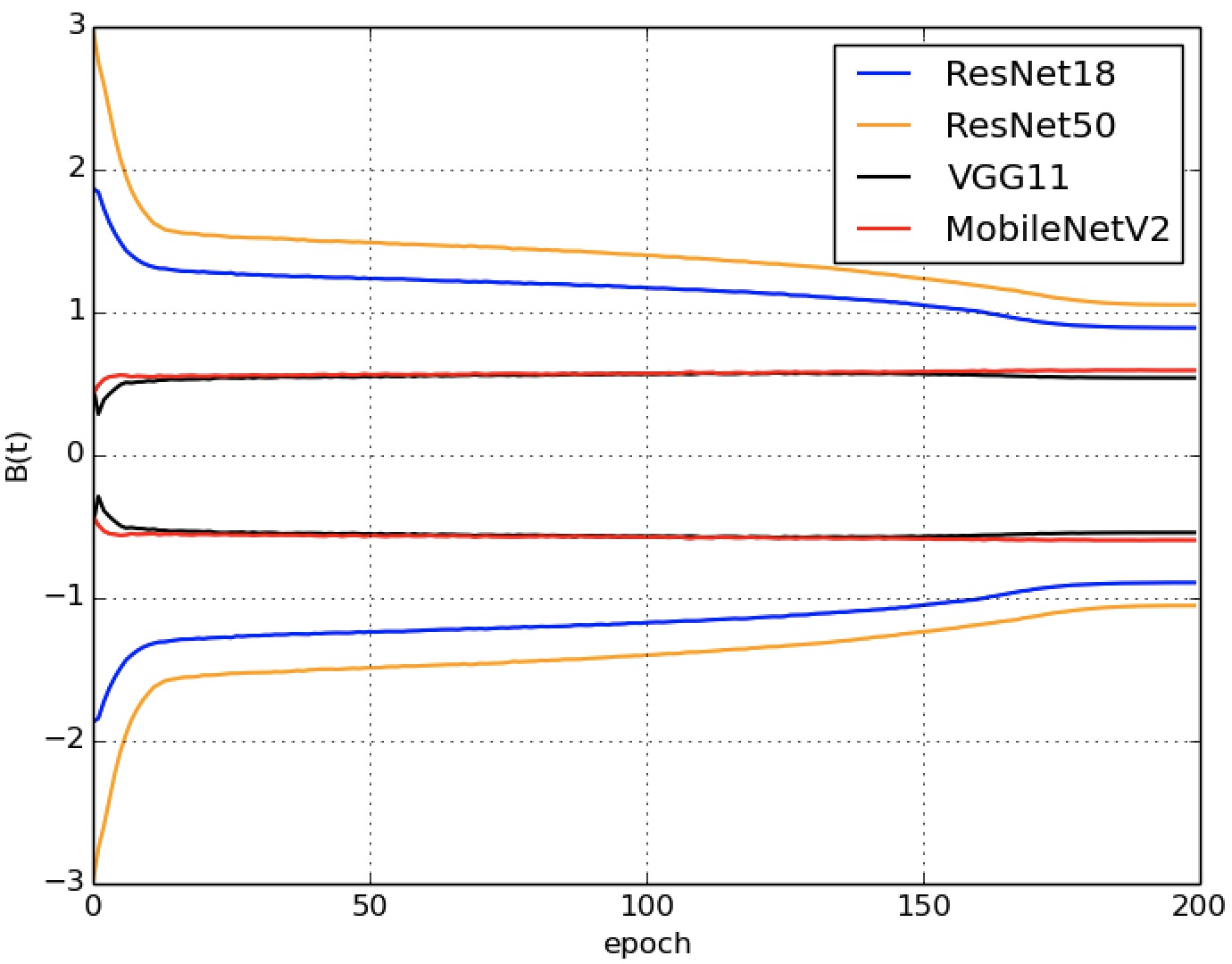}}
	\subfigure[CIFAR100]
	{\includegraphics[width=0.5\textwidth]{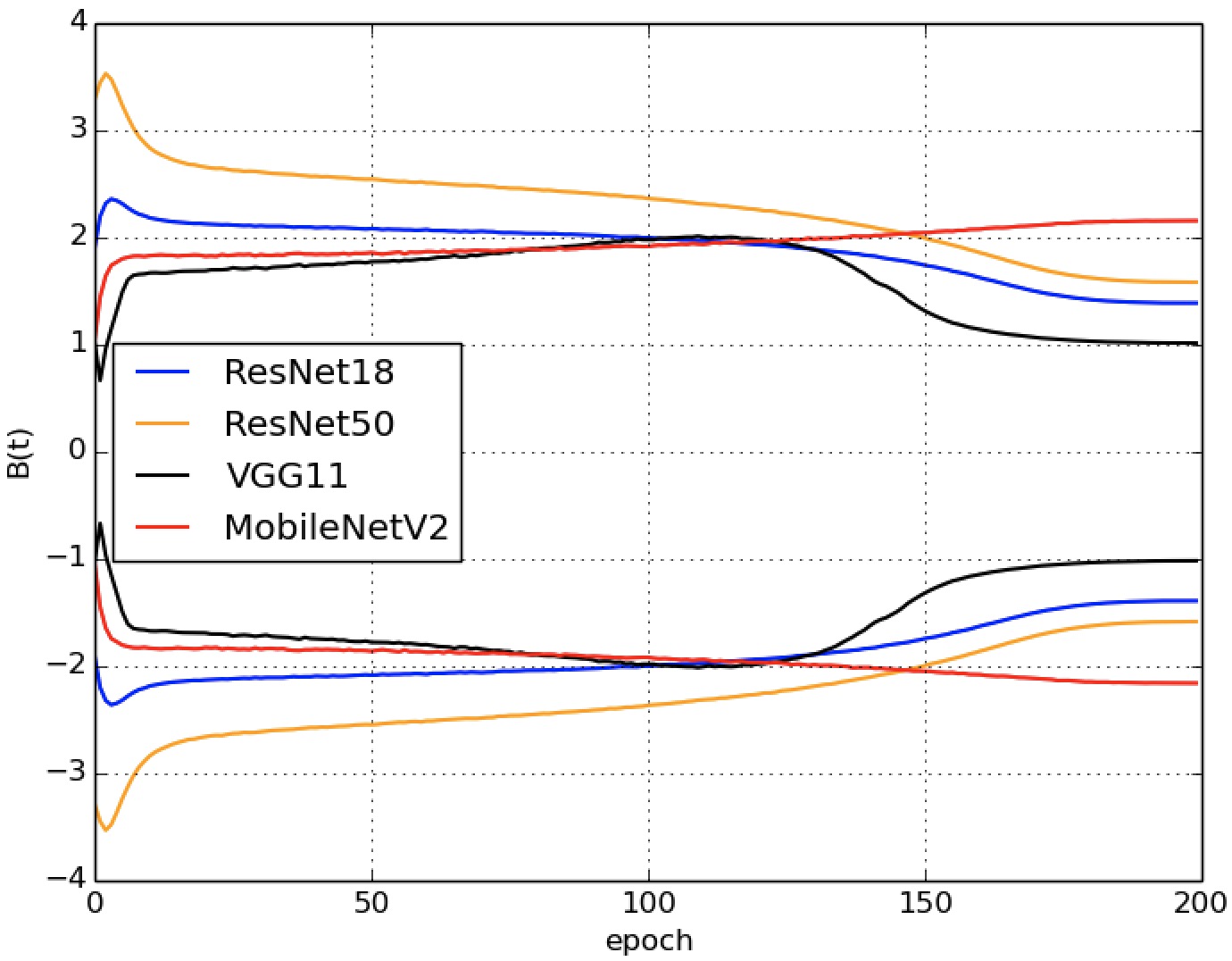}}
	\caption{The evolution of metrics $\bar{g}(t)$ by the radius of a ball with the epoch of training process.}
	\label{curvature}
\end{figure}

By observing the change of the ball in Figure~\ref{curvature}, we can know the change of the metric. Through simple observation, metrics $\bar{g}(t)$ have a rapid convergence at the beginning of training, and then become relatively flat. It seems that the convergence behavior of metrics is affected by the network structure rather than the depth (There has the similar evolution behavior on ResNet18 and ResNet50). In the training, experiments show that all metrics for Ricci flow assisted Eucl2Hyp2Eucl neural networks have a stable convergence. It is consistent with the evolution of scaled Ricci-DeTurck flow in Section~\ref{section3}.

\subsection{Training Time Analysis}

Our hardware environment is conducted with an Intel(R) Xeon(R) E5-2650 v4 CPU(2.20GHz), GeForce GTX 1080Ti GPU. We test the training time per iteration as for Ricci flow assisted Eucl2Hyp2Eucl neural networks and all Euclidean neural networks with ResNet18 in CIFAR10. For finishing one iteration of training, Ricci flow assisted Eucl2Hyp2Eucl neural network costs $\textbf{81.06s}$ and all Euclidean neural network costs $\textbf{39.76s}$.

\section{Conclusion}
\label{section6}

Ricci flow assisted Eucl2Hyp2Eucl neural networks not only provide the simply convenience and closed-form formula in the Euclidean space (for offline inference), but also take into account the geometric representation of the Hyperbolic space. This provides a new idea for neural networks to obtain meaningful geometric representations. Empirically, we found that Ricci flow assisted neural networks outperform with their all Euclidean counterparts on classification tasks.

The Ricci flow plays a vital role in Eucl2Hyp2Eucl neural networks, not only eliminating an $L^2$-norm perturbation of the Hyperbolic metric, but also acting as a smooth evolution from the Euclidean space to the Poincar\'e ball. Actually, Eucl2Hyp2Eucl neural networks without Ricci flow will become the same as all Euclidean neural networks. We hope that this paper will open an exciting future direction which will use the Ricci flow to assist neural network training in a dynamically stable manifold.

% In the unusual situation where you want a paper to appear in the
% references without citing it in the main text, use \nocite
\nocite{langley00}

\bibliography{example_paper}
\bibliographystyle{icml2022}

%%%%%%%%%%%%%%%%%%%%%%%%%%%%%%%%%%%%%%%%%%%%%%%%%%%%%%%%%%%%%%%%%%%%%%%%%%%%%%%
%%%%%%%%%%%%%%%%%%%%%%%%%%%%%%%%%%%%%%%%%%%%%%%%%%%%%%%%%%%%%%%%%%%%%%%%%%%%%%%
% APPENDIX
%%%%%%%%%%%%%%%%%%%%%%%%%%%%%%%%%%%%%%%%%%%%%%%%%%%%%%%%%%%%%%%%%%%%%%%%%%%%%%%
%%%%%%%%%%%%%%%%%%%%%%%%%%%%%%%%%%%%%%%%%%%%%%%%%%%%%%%%%%%%%%%%%%%%%%%%%%%%%%%
\newpage
\appendix
\onecolumn
\section{Differential Geometry}
\label{app1}
1. Riemann curvature tensor (Rm) is a (1,3)-tensor defined for a 1-form $\omega$:
\[
R^l_{ijk}\omega_l=\nabla_i \nabla_j \omega_k - \nabla_j \nabla_i \omega_k
\]
where the covariant derivative of $F$ satisfies
\[
\begin{aligned}
\nabla_{p} F_{i_{1} \ldots i_{k}}^{j_{1} \ldots j_{l}}=&\partial_{p} F_{i_{1} \ldots i_{k}}^{j_{1} \ldots j_{l}}+\sum_{s=1}^{l} F_{i_{1} \ldots i_{k}}^{j_{1} \ldots q \ldots j_{l}} \Gamma_{p q}^{j_{s}} -\sum_{s=1}^{k} F_{i_{1} \ldots q_{\ldots} i_{k}}^{j_{1} \ldots j_{l}} \Gamma_{p i_{s}}^{q}.
\end{aligned}
\]

In particular, coordinate form of the Riemann curvature tensor is:
\[
R_{i j k}^{l}=\partial_{i} \Gamma_{j k}^{l}-\partial_{j} \Gamma_{i k}^{l}+\Gamma_{j k}^{p} \Gamma_{i p}^{l}-\Gamma_{i k}^{p} \Gamma_{j p}^{l}.
\]

2. Christoffel symbol in terms of an ordinary derivative operator is:
\[
\Gamma^k_{ij}=\frac{1}{2}g^{kl}(\partial_i g_{jl}+\partial_j g_{il}-\partial_l g_{ij}).
\]

3. Ricci curvature tensor (Ric) is a (0,2)-tensor:
\[
R_{ij}=R^p_{pij}.
\]

4. Scalar curvature is the trace of the Ricci curvature tensor:
\[
R=g^{ij}R_{ij}.
\]

5. Lie derivative of $F$ in the direction $\frac{d \varphi(t)}{dt}$:
\[
\mathcal{L}_{\frac{d \varphi(t)}{dt}} F=\left(\frac{d}{dt}\varphi^*(t) F\right)_{t=0}
\]
where $\varphi(t): \mathcal{M} \rightarrow \mathcal{M}$ for $t\in(-\epsilon,\epsilon)$ is a time-dependent diffeomorphism of $\mathcal{M}$ to $\mathcal{M}$.

\section{Proof for the Ricci Flow}
\label{app2}

\subsection{Proof for Lemma~\ref{lem1}}
\label{app21}

\begin{definition}
	\label{linear}
	The linearization of the Ricci curvature tensor is given by
	\[
	D[\operatorname{Ric}](h)_{i j}=-\frac{1}{2} g^{p q}(\nabla_{p} \nabla_{q} h_{i j}+\nabla_{i} \nabla_{j} h_{p q}-\nabla_{q} \nabla_{i} h_{jp}-\nabla_{q} \nabla_{j} h_{i p}).
	\]
\end{definition}
\begin{proof}
	Based on Appendix~\ref{app1}, we have
	\[
	\nabla_{q} \nabla_{i} h_{j p} =\nabla_{i} \nabla_{q} h_{j p}-R_{q i j}^{r} h_{r p}-R_{q i p}^{r} h_{j m}.
	\]
	Combining with Definition~\ref{linear}, we can obtain the deformation equation because of $\nabla_k g_{ij}=0$,
	\[
	\begin{aligned}
	D[-2 \mathrm{Ric}](h)_{i j}=& g^{p q} \nabla_{p} \nabla_{q} h_{i j}+\nabla_{i}\left(\frac{1}{2} \nabla_{j} h_{p q}-\nabla_{q} h_{j p}\right) +\nabla_{j}\left(\frac{1}{2} \nabla_{i} h_{p q}-\nabla_{q} h_{i p}\right)+O(h_{ij}) \\
	=& g^{p q} \nabla_{p} \nabla_{q} h_{i j}+\nabla_{i} V_{j}+\nabla_{j} V_{i}+O(h_{ij}).
	\end{aligned}
	\]
\end{proof}

\subsection{Description of the DeTurck Trick}
\label{app22}

Using a time-dependent diffeomorphism $\varphi(t)$, we express the pullback metrics $\bar{g}(t)$:
\[
g(t)=\varphi^*(t) \bar{g}(t)
\]
is a solution of the Ricci flow. Based on the chain rule for the Lie derivative in Appendix~\ref{app1}, we can calculate
\[
\begin{aligned}
\frac{\partial}{\partial t} g(t) &=\frac{\partial\left(\varphi^{*}(t) \bar{g}(t)\right)}{\partial t} \\
&=\left(\frac{\partial\left(\varphi^{*}(t+\tau) \bar{g}(t+\tau)\right)}{\partial \tau}\right)_{\tau=0} \\
&=\left(\varphi^{*}(t) \frac{\partial \bar{g}(t+\tau)}{\partial \tau}\right)_{\tau=0}+\left(\frac{\partial\left(\varphi^{*}(t+\tau) \bar{g}(t)\right)}{\partial \tau}\right)_{\tau=0} \\
&=\varphi^{*}(t) \frac{\partial}{\partial t}\bar{g}(t)+\varphi^{*}(t) \mathcal{L}_{\frac{\partial \varphi(t)}{\partial t}} \bar{g}(t).
\end{aligned}
\]
With the help of Equation~(\ref{ricci}), for the reparameterized metric, we have
\[
\begin{aligned}
\frac{\partial}{\partial t} g(t)&=\varphi^{*}(t) \frac{\partial}{\partial t}\bar{g}(t)+\varphi^{*}(t) \mathcal{L}_{\frac{\partial \varphi(t)}{\partial t}} \bar{g}(t) \\
&=-2 \operatorname{Ric}(\varphi^*(t) \bar{g}(t)) \\
&=-2 \varphi^*(t) \operatorname{Ric}(\bar{g}(t)).
\end{aligned}
\]
The diffeomorphism invariance of the Ricci curvature tensor is used in the last step. The above equation is equivalent to
\[
\frac{\partial}{\partial t}\bar{g}(t)=-2 \operatorname{Ric}(\bar{g}(t))- \mathcal{L}_{\frac{\partial \varphi(t)}{\partial t}} \bar{g}(t).
\]

\subsection{Proof for Theorem~\ref{thm3}}
\label{app23}

Considering any $x \in \mathcal{M}$, $t_0 \in [0, T)$, $V \in T_x \mathcal{M}$, we have
\[
\begin{aligned}
\left|\log \left(\frac{g_x (t_0)(V, V)}{g_x (0)(V, V)}\right)\right| &=\left|\int_{0}^{t_{0}} \frac{\partial}{\partial t}\left[\log g_x (t)(V, V)\right] d t\right| \\
&=\left|\int_{0}^{t_{0}} \frac{\frac{\partial}{\partial t} g_x (t)(V, V)}{g_x (t)(V, V)} d t\right| \\
& \leq \int_{0}^{t_{0}}\left|\frac{\partial}{\partial t} g_x (t) \left(\frac{V}{|V|_{g(t)}}, \frac{V}{|V|_{g(t)}}\right)\right| d t \\
& \leq \int_{0}^{t_{0}}\left|\frac{\partial}{\partial t} g_x (t)\right|_{g(t)} d t \\
& \leq C.
\end{aligned}
\]
By exponentiating both sides of the above inequality, we have
\[
e^{-C} g_x(0)(V, V) \leq g_x(t_0)(V, V) \leq e^C g_x(0)(V, V).
\]
This inequality can be rewritten as
\[
e^{-C} g_x(0) \leq g_x(t_0)(V, V) \leq e^C g_x(0)(V, V)
\]
because it holds for any $V$. Thus, the metrics $g(t)$ are uniformly equivalent to $g(0)$.

Now, we have the well-defined integral:
\[
g_x(T) - g_x(0) = \int_{0}^{T}\frac{\partial}{\partial t} g_x (t) d t.
\]
We say that this integral is well-defined because of two reasons. Firstly, as long as the metrics are smooth, the integral exists. Secondly, the integral is absolutely integrable. Based on the norm inequality induced by $g(0)$, one has
\[
|g_x(T) - g_x(t)|_{g(0)} \leq \int_{t}^{T}\left|\frac{\partial}{\partial t} g_x (t)\right|_{g(0)} d t.
\]
For each $x \in \mathcal{M}$, the above integral will approach to zero as $t$ approaches $T$. Because $\mathcal{M}$ is compact, the metrics $g(t)$ converge uniformly to a continuous metric $g(T)$ which is uniformly equivalent to $g(0)$ on $\mathcal{M}$. Moreover, we can show that
\[
e^{-C} g_x(0) \leq g_x(T) \leq e^C g_x(0).
\]

\section{Proof for Lemma~\ref{lem2}}
\label{app3}

Empirically, we have

\[
\frac{\partial}{\partial t}\left|\bar{g}-g^H\right|^{2} = \sum \frac{\partial}{\partial t} \left(\bar{g}^2_{ij}-2\bar{g}_{ij}g^H_{ij}\right).
\]
At this time, $\bar{g}_{ij}$ or $g^H_{ij}$ is the given coordinate representation. Based on \citep{shi1989deforming}, we can write $\frac{\partial}{\partial t} \bar{g}_{i j}$:

\[
\begin{aligned}
\frac{\partial}{\partial t} \bar{g}_{i j}=&\bar{g}^{a b} \nabla_{a} \nabla_{b} \bar{g}_{i j} \\
&+2 \bar{g}_{i j}\left(\bar{g}^{k l}\left(g^H_{k l}-\bar{g}_{k l}\right)\right)+2\left(\bar{g}_{i j}-g^H_{i j}\right) \\
&+\frac{1}{2} \bar{g}^{a b} \bar{g}^{p q}\left(\nabla_{i} \bar{g}_{p a} \nabla_{j} \bar{g}_{q b}+2 \nabla_{a} \bar{g}_{j p} \nabla_{q} \bar{g}_{i b}\right. \\
&\left.-2 \nabla_{a} \bar{g}_{j p} \nabla_{b} g_{i q}
-2 \nabla_{j} \bar{g}_{p a} \nabla_{b} \bar{g}_{i q}-2 \nabla_{i} \bar{g}_{p a} \nabla_{b} \bar{g}_{j q}\right).
\end{aligned}
\]

In the third line of the above formula, we denote contractions as in \citep{hamilton1982three}:
\[
\begin{aligned}
&(\nabla \bar{g} * \nabla \bar{g})_{ij} =\bar{g}^{a b} \bar{g}^{p q}\left(\nabla_{i} \bar{g}_{p a} \nabla_{j} \bar{g}_{q b}+2 \nabla_{a} \bar{g}_{j p} \nabla_{q} \bar{g}_{i b}\right. \\
&\left.-2 \nabla_{a} \bar{g}_{j p} \nabla_{b} g_{i q}
-2 \nabla_{j} \bar{g}_{p a} \nabla_{b} \bar{g}_{i q}-2 \nabla_{i} \bar{g}_{p a} \nabla_{b} \bar{g}_{j q}\right).
\end{aligned}
\]
Followed by Definition~\ref{def2}, we have the estimate
\[
\begin{aligned}
& \frac{\partial}{\partial t}\left|\bar{g}-g^H\right|^{2}=2 \sum\left(\bar{g}_{i j}-g^H_{i j}\right)\left(\frac{\partial}{\partial t} \bar{g}_{ij}\right) \\
=& \bar{g}^{i j} \nabla_{i} \nabla_{j}\left|\bar{g}-g^H\right|^{2}-2|\nabla \bar{g}|^{2} \\
&+2 \sum\left(\bar{g}_{ij}-g^H_{ij}\right)\left[2\left(\bar{g}_{ij}-g^H_{ij}\right)-2 \bar{g}_{ij} \sum\left(\bar{g}^{kl}\left(\bar{g}_{kl}-g^H_{kl}\right)\right)\right] \\
&+\sum\left(\bar{g}_{ij}-g^H_{ij}\right)(\nabla \bar{g} * \nabla \bar{g})_{ij} \\
= & \bar{g}^{i j} \nabla_{i} \nabla_{j}\left|\bar{g}-g^H\right|^{2}-2|\nabla(\bar{g}-g^H)|^{2} \\
&+4 \left|\bar{g}-g^H\right|^2 - 4\sum\left[\left(\bar{g}_{ij}-g^H_{ij}\right) \bar{g}_{ij} \sum\left(\bar{g}^{kl}\left(\bar{g}_{kl}-g^H_{kl}\right)\right)\right] \\
\leq & \bar{g}^{i j} \nabla_{i} \nabla_{j}\left|\bar{g}-g^H\right|^{2}-2|\nabla(\bar{g}-g^H)|^{2} +4 \left|\bar{g}-g^H\right|^2 \\
& +\frac{4\epsilon}{1+\epsilon}\sum\left[g^H_{ij} \bar{g}_{ij}\sum\left(\bar{g}^{kl}\left(\bar{g}_{kl}-g^H_{kl}\right)\right)\right].
\end{aligned}
\]
In the second line, we have $\nabla_{i}\nabla_{j}(\bar{g}^2)=\nabla_{i}(2\bar{g}\cdot\nabla_{j}\bar{g})=2\nabla_{i}\bar{g}\cdot\nabla_{j}\bar{g}+2\bar{g}\nabla_{i}\nabla_{j}\bar{g}$ and $\nabla_{i}\nabla_{j}({g^H}^2)=0$ because $\nabla$ is compatible with $g^H$.

\section{Proof for Corollary~\ref{cor2}}
\label{app4}

\begin{definition}
	\label{divergence}
	$D[P:Q]$ is called a divergence when it satisfies the following criteria:
	
	1) $D[P:Q] \geq 0$.
	
	2) $D[P:Q]=0$ when and only when $P=Q$.
	
	3) When $P$ and $Q$ are sufficiently close, by denoting their coordinates by $\boldsymbol{\xi}_P$ and $\boldsymbol{\xi}_Q=\boldsymbol{\xi}_P+d\boldsymbol{\xi}$, the Taylor expansion of $D$ is written as
	\[
	D[\boldsymbol{\xi}_P:\boldsymbol{\xi}_P+d\boldsymbol{\xi}]=\frac{1}{2}\sum_{i,j} g_{ij}(\boldsymbol{\xi}_P) d\xi_i d\xi_j + O(|d\boldsymbol{\xi}|^3),
	\]
	and Riemannian metric $g_{ij}$ is positive-definite, depending on $\boldsymbol{\xi}_P$.
\end{definition}

With constraint the divergence in a constant $C$, we do the minimization of the loss function $\operatorname{loss}(\boldsymbol{\theta})$ in Lagrangian form:
\[
\begin{aligned}
&d\boldsymbol{\theta}^* = \operatorname*{arg\ min}_{s.t. D[\boldsymbol{\theta}:\boldsymbol{\theta}+d\boldsymbol{\theta}]=C} \operatorname{loss}(\boldsymbol{\theta}+d\boldsymbol{\theta}) \\
&=\operatorname*{arg\ min}_{d\boldsymbol{\theta}} \operatorname{loss}(\boldsymbol{\theta}+d\boldsymbol{\theta}) +\lambda\left(D[\boldsymbol{\theta}:\boldsymbol{\theta}+d\boldsymbol{\theta}]-C\right) \\
&\approx \operatorname*{arg\ min}_{d\boldsymbol{\theta}} \operatorname{loss}(\boldsymbol{\theta}) + \partial_{\boldsymbol{\theta}}\operatorname{loss}(\boldsymbol{\theta})^\top d\boldsymbol{\theta} +\frac{\lambda}{2}d\boldsymbol{\theta}^\top g_{ij} d\boldsymbol{\theta}-C\lambda.
\end{aligned}
\]

To solve the above minimization, we set its derivative with respect to $d\boldsymbol{\theta}$ to zero:
\[
\begin{aligned}
&0 = \partial_{\boldsymbol{\theta}}L(\boldsymbol{\theta}) + \frac{\lambda}{2} g_{ij} d\boldsymbol{\theta} \\
&-\frac{\lambda}{2} g_{ij} d\boldsymbol{\theta} = \partial_{\boldsymbol{\theta}}\operatorname{loss}(\boldsymbol{\theta}) \\
& d\boldsymbol{\theta} = -\frac{2}{\lambda} g^{-1}_{ij}\partial_{\boldsymbol{\theta}}\operatorname{loss}(\boldsymbol{\theta}).
\end{aligned}
\]
Where a constant factor $2/\lambda$ can be absorbed into learning rate. The opposite direction $-d\boldsymbol{\theta}$ is the steepest descent direction in a Riemannian manifold endowed with $g_{ij}$.
%%%%%%%%%%%%%%%%%%%%%%%%%%%%%%%%%%%%%%%%%%%%%%%%%%%%%%%%%%%%%%%%%%%%%%%%%%%%%%%
%%%%%%%%%%%%%%%%%%%%%%%%%%%%%%%%%%%%%%%%%%%%%%%%%%%%%%%%%%%%%%%%%%%%%%%%%%%%%%%

\section{ImageNet}
\label{app5}

{\bf ImageNet dataset.} ILSVRC2012 \cite{deng2009imagenet} image classification dataset consists of 1.2 million high-resolution natural images where the validation set contains 50k images. These images are organized into 1000 categories of the object for training, which are resized to 224$\times$224 pixels before fed into the network. In the next experiment, we report our single-crop evaluation results using top-1 and top-5 accuracy. In preprocessing, images are resized randomly to 256$\times$256 pixels, and then a random crop of 224$\times$224 is selected for training.

{\bf Settings.} We set total training epochs as 100 where the learning rate of each parameter group is set as a cosine annealing schedule. The learning strategy is a weight decay of 0.0001, a batch size of 128, SGD optimization. On Table~\ref{table2}, we apply ResNet18 and ResNet50 to test the classification accuracy.

\begin{table*}[htbp]
	\caption{The classification accuracy results on ImageNet dataset with ResNet18 and ResNet50.}
	\begin{center}
		\begin{tabular}{c|c|c|c}
			\toprule[1pt]
			\textbf{Network} & \textbf{Method} & \textbf{Top-1 Acc} & \textbf{Top-5 Acc} \\
			\toprule[1pt]
			\multirow{2}{*}{ResNet18} & All Euclidean Neural Network & 69.76\% & 89.08\% \\
			& Ricci Flow Assisted Eucl2Hyp2Eucl Neural Network & 70.55\% & 89.67\% \\
			\midrule
			\multirow{2}{*}{ResNet50} & All Euclidean Neural Network & 76.13\% & 92.86\% \\
			& Ricci Flow Assisted Eucl2Hyp2Eucl Neural Network & 77.41\% & 93.54\% \\
			\toprule[1pt]
		\end{tabular}
	\end{center}
	\label{table2}
\end{table*}

\end{document}